\documentclass[11pt]{article}
\usepackage{float}

\usepackage[final]{acl}

\usepackage{times}
\usepackage{latexsym}
\usepackage[T1]{fontenc}
\usepackage[utf8]{inputenc}
\usepackage{microtype}
\usepackage{inconsolata}
\usepackage{graphicx}
\usepackage{cite}
\usepackage{subcaption}
\usepackage{amsmath,amssymb,amsfonts}
\usepackage{algorithm}
\usepackage{algorithmic}
\usepackage{textcomp}
\usepackage{xcolor}
\usepackage{multirow}
\usepackage{booktabs}
\usepackage{stfloats}
\usepackage{comment}
\newtheorem{proof}{Proof}
\newtheorem{theorem}{Theorem}
\newtheorem{proposition}{Proposition}
\usepackage{tabularx}

\title{Self-Reinforcing Controllable Synthesis of Rare Relational Data via Bayesian Calibration}

\author{
  Chongsheng Zhang\thanks{\ \ Equal contribution}\thanks{\ \ Corresponding author}$^{1,2}$~~~
  Hao Wang\footnotemark[1]$^{1}$~~~
  Zelong Yu\footnotemark[1]$^{1}$~~~
  Esteban Garces Arias\footnotemark[1]$^{2,3}$\\
  \textbf{Julian Rodemann}\footnotemark[1]$^{2,4}$~~~
  \textbf{Zhanshuo Zhang}$^{1}$~~~
 \textbf{Qilong Li}$^{1}$~~~
  \textbf{Gaojuan Fan}$^{1}$~~~\\
    \textbf{Krikamol Muandet}$^{4}$~~~
  \textbf{Christian Heumann}$^{2}$\\
\\
  $^1$Henan University, China $^2$Department of Statistics, LMU Munich
  $^3$MCML Munich\\
  $^4$CISPA Helmholtz Center for Information Security, Saarbrücken, Germany\\
   \small{\textbf{Correspondence:} \href{mailto:cszhang@henu.edu.cn}{cszhang@henu.edu.cn}}
}

\begin{document}
\maketitle
\begin{abstract}
Imbalanced data are commonly present in real-world applications. While data synthesis can effectively mitigate data scarcity for rare classes, and LLMs have revolutionized text generation, the application of LLMs to the synthesis of relational/structured tabular data remains underexplored. Moreover, existing approaches lack an effective feedback mechanism to guide LLMs in continuously optimizing the quality of the generated data throughout the synthesis process. In this work, we propose RDDG, \textbf{R}elational \textbf{D}ata generator with \textbf{D}ynamic \textbf{G}uidance, which is a unified in-context learning framework that employs progressive chain-of-thought (CoT) steps to generate tabular data for enhancing downstream imbalanced classification performance. RDDG first uses core set selection to identify representative samples from the original data, then utilizes in-context learning to discover the inherent patterns and correlations among attributes within the core set, and subsequently generates tabular data while preserving the aforementioned constraints. More importantly, it incorporates a self-reinforcing feedback mechanism that provides automatic assessments of the quality of the generated data, enabling continuous quality optimization throughout the generation process. Experimental results on multiple real and synthetic datasets demonstrate that RDDG outperforms existing approaches in both data fidelity and downstream imbalanced classification performance. We make our code available at \url{https://github.com/cszhangLMU/RDDG}.
\end{abstract}

\section{Introduction}
The scarcity of data, particularly annotated data, has been a pervasive challenge in deep learning and the new era of artificial intelligence (AI). Automated data synthesis techniques are now widely used to address data scarcity, particularly with the rise of large foundation models. For instance, large language models (LLMs) have been widely used in various text generation and analysis tasks \citep{llmsurvey2023,llmsurvey2024}, while multi-modal large foundation models have also been used to generate image data to enhance visual learning performance \citep{ltgc}. 

Despite the revolutionary impact of foundation models and their widespread applications in \textit{unstructured} text and image generation, their potential for relational or structured tabular data synthesis is still underexplored \citep{epic}, whereas non-LLM-based approaches have already demonstrated their effectiveness in relational data generation for enhancing imbalanced classification performance \citep{ag,tvae,tabddpm}. Moreover, there is no internal self-reinforcing feedback mechanism to guide foundation models to continuously optimize the quality of generated data throughout the in-context data synthesis process. 

To address the above challenges, we propose RDDG, a novel framework with progressive chain-of-thought (CoT) steps and dynamic guidance for in-context synthesis of tabular data using LLMs. RDDG first performs core set selection based on sample error variance to identify the most representative training samples, which are then fed into LLMs along with initial prompts that contain dataset and attribute information, thereby guiding the LLMs to analyze functional relationships among attributes. With these patterns and constraints in hand, RDDG continually feeds each batch of real data to LLMs, guiding them to generate semantically meaningful samples. This procedure incorporates a self-reinforcing feedback mechanism that automatically assesses the quality of the generated data relative to real data. The evaluation results are then converted into feedback prompts to be integrated into the subsequent in-context learning, enabling continuous optimization of data quality throughout the synthesis process.

\paragraph{Contributions} The main contributions of this paper are as follows:
\begin{itemize}
\item We propose a unified in-context learning framework with progressive CoT steps for generating tabular data to enhance imbalanced classification, which obtains domain-specific prior knowledge by mining functional relationships, leading to constraint-driven, controllable synthesis of tabular data.

\item We devise a self-reinforcing feedback mechanism that uses multiple quality measures to automatically assess the quality of the current batch of generated data, which is then converted into feedback prompts and propagated to the following in-context learning process, thereby continuously improving the realism of the generated data throughout the synthesis process.

\item We formulate the self-reinforcing feedback process as a Bayesian calibration problem, prove Bayes-optimality of our framework (Theorem~\ref{thm:bayes-opt}), and show that our feedback mechanism targets these optimal strategies under some assumptions (Proposition~\ref{prop}).

\item We conduct extensive experiments on eight datasets. Compared to the state-of-the-art in-context learning approaches for imbalanced classification, RDDG achieves average improvements of more than 2\% and 1\% in the weighted Macro-F1 metric and Balanced Accuracy, respectively, while simultaneously preserving superior data fidelity.
\end{itemize}

\section{Related Work}

Based on neural architectures, deep data generation methods can be divided into statistical distribution-based methods \citep{leap,ladc}, GAN-based methods \citep{ctgan,tablegan,glgan,ref10,ctabganplus}, diffusion model-based methods \citep{tabddpm,CDTD}, and foundation model-based methods \citep{whytabllm24,tabllmnature2025,TabPFN,great,epic}. 

Statistical distribution-based methods focus on modeling underlying data distributions and generate samples via distribution sampling, whereas GAN-based methods utilize the GAN architecture for data generation. TableGAN \citep{tablegan} pioneered GAN-based tabular data synthesis by incorporating an additional classifier module to co-supervise the generation process. GLGAN \citep{glgan} first adopts SMOTE \citep{smote} to create minority samples, then utilizes GANs to learn underlying data distributions. CTAB-GAN \citep{ref10} and CTAB-GAN+ \citep{ctabganplus} jointly optimize adversarial and classification losses to improve minority class sample generation capabilities.

TabDDPM \citep{tabddpm} investigates Diffusion model-based tabular data synthesis, generating high-quality samples through forward noise injection and reverse denoising processes. CDTD \citep{CDTD} adapts the diffusion model for generating mixed-type features, adaptively scaling the respective losses for numeric and categorical features.

The emergence of pretrained foundation models has opened new avenues for data synthesis \citep{whytabllm24}, and existing approaches in this direction can be further categorized into fine-tuning and prompt-based methods. TabPFN \citep{tabllmnature2025,TabPFN} is a Transformer-based tabular foundation model that modifies the self-attention mechanism so that training samples can only attend to other training samples, pre-trained on 100 million synthetic datasets generated using structural causal models. Unlike TabPFN, TABULA-8B \citep{tabula8b} builds a tabular foundation model by fine-tuning Llama 3-8B using a filtered 4-million high-quality subset of real-world web-crawled tables, converting relational records into template-based prompts. Similarly, LLM-GTL \citep{GTL} converts tabular samples into template-based, instruction-oriented prompts and fine-tunes Llama-2 using 340 real-world Kaggle datasets. GReaT \citep{great} and EPIC \citep{epic} employ prompt-based methods that leverage in-context learning to guide LLMs in generating synthetic tabular data without requiring model updates or fine-tuning.

\noindent \textbf{Discussion}. Despite the above advances in LLM-based tabular data generation, i) a significant gap exists between data generation methods and downstream task optimization goals, particularly imbalanced classification; and ii) there is a lack of an internal feedback mechanism that can guide LLMs to continuously optimize the quality of the generated data throughout the in-context learning process.

\section{Methodology}

Given a sample of real data $\mathcal{R}$ drawn from some unknown distribution $\mathbb{P}_{\mathcal{R}}$, our goal is to generate synthetic data $\mathcal{S}$ via $ \mathcal{S} = S_\phi(\mathcal{R}, \mathcal{C}, \mathcal{F})$ where $S_\phi(\cdot, \cdot, \cdot)$ is the generator parametrized by $\phi$, $\mathcal{C}$ is a set of constraints, and $\mathcal{F}$ is the feedback. The feedback $\mathcal{F}$ constitutes the \textit{self-reinforcing} synthesis mechanism, while the constraints $\mathcal{C}$ render the method \textit{controllable}.

Our procedure is sequential, i.e., for $i \in \mathcal{I}$, batches of synthetic data $\mathcal{S}_i$ are generated by
\begin{equation}
\mathcal{S}_i = S_\phi(\mathcal{R}_i, \mathcal{C}, \mathcal{F}_{i-1})
\end{equation}
with $\mathcal{I}$ being an index set, typically a finite subset of natural numbers $\mathbb{N}$. Notably, the sequential setup invokes the Bayesian perspective on simulation \citep{wade2022bayesian}, as detailed in Section \ref{sec:bayesian-view}.

\noindent \textbf{Objective } At any iteration $i \in \mathcal{I}$, we aim to generate synthetic data $\mathcal{S}_i$ that is close to the unknown distribution $\mathbb{P}_{\mathcal{R}}$ with respect to some divergence measure $d$ (e.g., Kullback-Leibler (KL) divergence), i.e., 
\begin{equation}
    \min_{\mathcal{S}_i}\; d(\hat{ \mathbb{P}}_{\mathcal{S}_i},
    \mathbb{P}_{\mathcal{R}}) 
\end{equation}
with $\hat{ \mathbb{P}}_{\mathcal{S}_i}$ being the empirical measure of $\mathcal{S}_i$
for any $i \in \mathcal I$. In the following, we present our method that aims at achieving this goal by approximating $\arg\min_{\mathcal{S}_i} 
    d(\hat{ \mathbb{P}}_{\mathcal{S}_i},
    \mathbb{P}_{\mathcal{R}})$, including theoretical intuition based on a Bayesian perspective in Section~\ref{sec:bayesian-view}.

\subsection{Overall Framework}
In Fig. \ref{fig:framework}, we present our RDDG framework, i.e., \underline{\textbf{R}}elational \underline{\textbf{D}}ata generator with \underline{\textbf{D}}ynamic \underline{\textbf{G}}uidance, which is a progressive in-context learning framework with an internal self-reinforcing feedback mechanism to produce constraint-compliant, high-fidelity relational data for enhancing downstream imbalanced classification performance.

\begin{figure*}[htbp]
    \centering
    \includegraphics[width=1\linewidth]{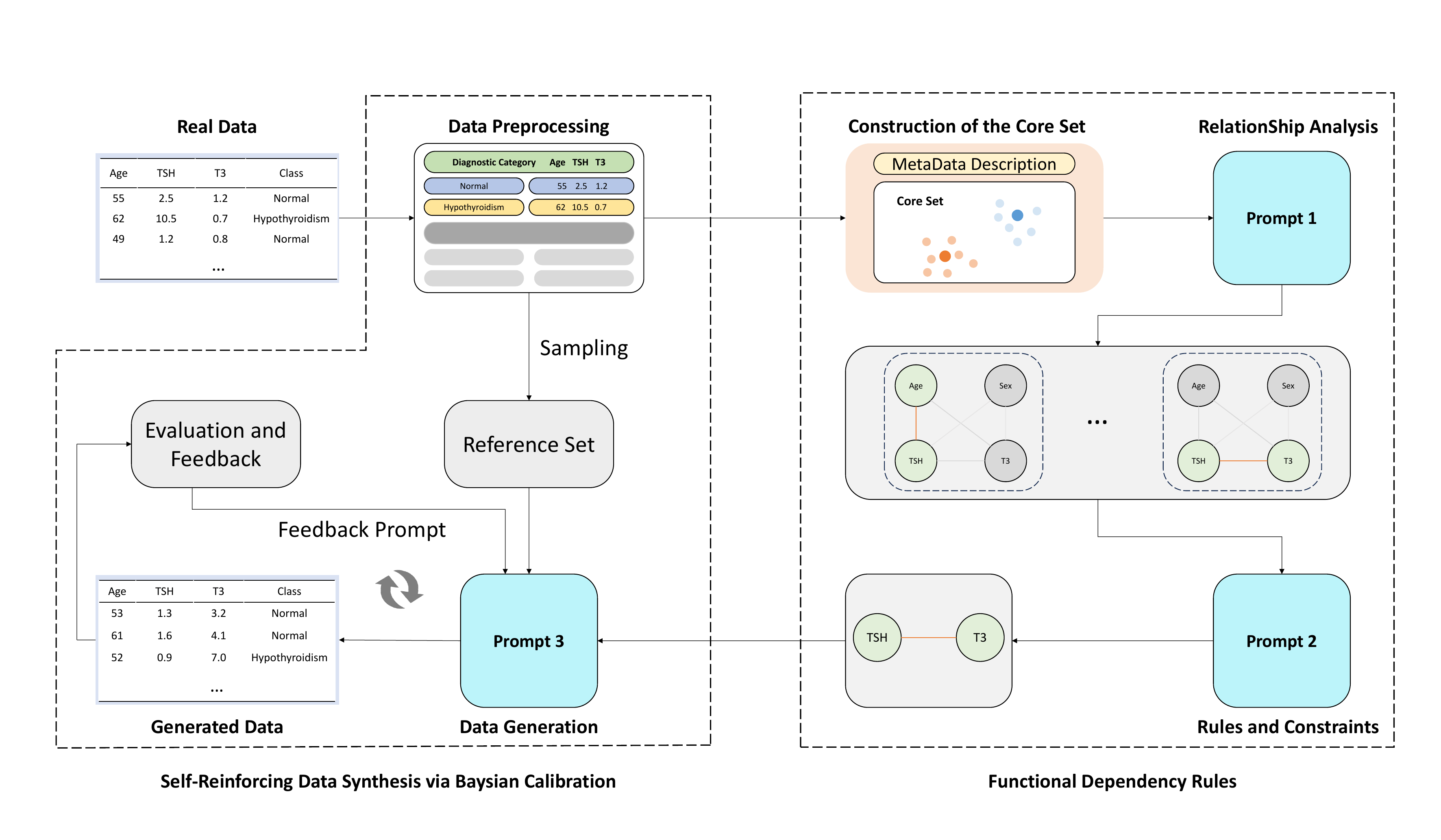}
    \vspace{-20pt}
    \caption{Overall framework of RDDG, consisting of three main steps, which are core set construction, relation mining, and data generation and constraint optimization. (i) Core set construction selects representative samples from original data to address the context-length limitation of LLMs; (ii) Relation mining uncovers the latent functional relationships (patterns and inter-attribute correlations) from core set (prompts 1 and 2); (iii) Batch-wise data generation considers both the functional relationships from step (ii) and a batch of reference set (prompt 3\_1), and devises a self-reinforcing feedback mechanism (prompt 3\_2) for continuously optimizing in-context learning.} \label{fig:framework}
\end{figure*}

Prior to in-context learning, given the context-window limitation of LLMs, we first curate a representative core set from the training data to ensure comprehensive coverage of the data's characteristics. 

The second step is relation mining, in which we use in-context learning to uncover latent patterns and inter-attribute relationships within the core set, which are then established as explicit structural constraints to guide LLMs in the generation process.

The third step is data generation and constraint optimization, in which RDDG incorporates an internal self-reinforcing feedback mechanism that automatically assesses the quality of the generated data from the preceding batch, with the corresponding evaluation results being converted into feedback prompts and incorporated with the explicit constraints into subsequent in-context learning, to guide LLMs to continuously improve the quality/fidelity of the generated data. This batch-wise generation paradigm continues until the total number of synthetic samples reaches the target threshold. A detailed overview of these steps is provided as a pseudo-algorithm in Appendix \ref{sec:pseudoalgo}.

\subsection{Core Set Construction}

To provide high-quality training samples to LLMs under prompt-length constraints, we propose grafting the core set sample selection method \citep{coreset}, originally designed for processing high-dimensional medical data in resource-limited environments, to address the context-length limitation of LLMs.

In our design, we use a straightforward MLP model and split the overall training process into early, middle, and late stages. As with \citet{coreset}, we aim to identify samples that display high variance in prediction error across both early and late training stages. Specifically, we calculate the prediction error (L2 error) of the MLP model on each tabular sample at every epoch, denoted in Equation \ref{eq:l2_loss}, then calculate the mean and variance of each sample's prediction errors across both the early and late training stages/epochs. Finally, for each class, we select the Top-K samples that exhibit the highest variance (\textit{Var}) in prediction errors, as depicted in Equation \ref{k_set}. If the number of samples in a class is less than K, we repeat sampling for that class. 

\begin{equation}
\mathcal{L}_2(\mathbf{y}_{\text{pred}}, \mathbf{y}_{\text{true}}) = \|\mathbf{y}_{\text{pred}} - \mathbf{y}_{\text{true}}\|_2^2
\label{eq:l2_loss}
\end{equation}

\begin{equation}
    \text{Top}_{k}(k) = \arg top_{K}([\text{Var}_{i} \mid i \in N_{k}])
    \label{k_set}
\end{equation}

Overall, our class-balanced core set selection strategy identifies, for each class, the Top-K samples with the highest variance in prediction errors, ensuring that each class receives fair attention when fed into LLMs. More details are given in Appendix \ref{Appendix-data}.

\subsection{Progressive In-Context Data Synthesis}

\noindent \textbf{Relation Mining}. Our RDDG framework resembles the ``chain-of-thought (CoT)'' process \citep{cot}, with step-by-step reasoning steps to guide LLMs in the generation process. To generate realistic synthetic data, we guide LLMs to first analyze the functional relationships (mainly the patterns and inter-attribute relationships) from the core set (prompt 1); next, we instruct LLMs to comprehensively take the core set, metadata, and the above relationships into account to establish explicit rules and constraints for data generation (prompt 2).

\noindent \textbf{Data Generation}. In the data generation phase, the original training set is partitioned into multiple subsets (batches) and used as reference sets for in-context learning. In each iteration, given a reference set and the above rules and constraints from relation mining, we guide LLMs to generate synthetic samples (prompt 3\_1). 

To further control the quality of the generated data, we introduce the self-reinforcing feedback mechanism for constraint optimization below. 

\subsection{Dynamic Guidance Adjustment}\label{sec:guidance-adj}

To ensure continuous improvement in the quality of generated data across subsequent batches, we integrate a novel self-reinforcing feedback mechanism into the synthesis process, which provides automatic evaluations of the current batch's quality to dynamically guide in-context learning. 

Specifically, for each batch $i$ of real data (reference set), we generate a corresponding batch of synthetic data, then immediately evaluate its quality through three key dimensions: Statistical Consistency, which involves comparing means and standard deviations between generated and real data; Correlation Preservation, utilizing Pearson correlation coefficients to assess the maintenance of inter-attribute relationships; and Distribution Consistency, employing Kolmogorov-Smirnov tests to verify distributional alignment.

Crucially, the feedback from evaluating batch~$i$ is not used to regenerate the same batch, but rather incorporated as additional guidance when processing batch $i+1$ with a new subset of real data. Formally, let $\mathcal{R}_i$ denote the $i$-th batch of real data (reference set) and $\mathcal{S}_i$ the corresponding synthetic batch. The generation process follows:

\begin{equation}
\mathcal{S}_i = S_\phi(\mathcal{R}_i, \mathcal{C}, \mathcal{F}_{i-1})
\end{equation}

\noindent where $S_\phi$ is the generation mechanism (treated as a random variable) with hyperparameter $\phi$, $\mathcal{C}$ represents the constraints derived from relation mining, and $\mathcal{F}_{i-1}$ represents the feedback guidance computed from evaluating the previous batch $\mathcal{S}_{i-1}$ against $\mathcal{R}_{i-1}$.

This forward-propagating feedback mechanism creates a self-optimizing generation pipeline where each iteration benefits from insights gained in previous rounds/batches. The quality evaluation results are then transformed into a prompt (prompt 3\_2) and incorporated into the existing prompt (prompt 3\_1) for the next batch generation. This sequential refinement continues until the total number of synthetic samples reaches the target threshold, progressively enhancing the semantic consistency and statistical fidelity of synthesized data. Detailed prompt designs are provided in Tables \ref{tab:prompt_design}, \ref{fullprompt1}, and \ref{fullprompt2} in the Appendix \ref{Appendix-prompt}.

\subsection{A Bayesian View on Dynamic Guidance Adjustment}
\label{sec:bayesian-view}

Our RDDG framework with a self-reinforcing feedback mechanism can be understood from a Bayesian perspective on simulation \citep{wade2022bayesian,poole2000inference,andradottir2000applying}.
Recall from Section~\ref{sec:guidance-adj} that we generate synthetic batches $\mathcal{S}_i$ via $ \mathcal{S}_i = S_\phi(\mathcal{R}_i, \mathcal{C}, \mathcal{F}_{i-1})$ with hyperparameters $\phi$, where $\mathcal{R}_i$ is real data, $\mathcal{C}$ are constraints, and $ \mathcal{F}_{i-1}$ being the feedback guidance from previous $\mathcal{S}_{i-1}$ against $\mathcal{R}_{i-1}$. Bayesian calibration \citep{wade2022bayesian} treats $\phi$ as unknown and places a prior $p(\phi)$ encoding structural beliefs discovered during the relation mining phase.

Given $\mathcal{R}_i$, $S_\phi$ and summary targets $T(\mathcal{R})$ (means, standard deviations, Pearson correlations, KS distances), we posit 
a likelihood $p\!\left(T(\mathcal{R})\mid T\!\left(S_\phi\right)\right)$ that scores synthetic batches against $\mathcal{R}$. Calibration then infers the posterior $p(\phi \mid T(\mathcal{R})) \propto p\!\left(T(\mathcal{R})\mid T\!\left(S_\phi\right)\right)p(\phi)$. The \emph{closed loop} appears as sequential Bayesian updating over batches $i=1,2,\dots$, where feedback metrics $F_i$ act as posterior predictive checks; each update nudges $\phi$ toward regions that simultaneously preserve functional relations and improve class-imbalance targets, thereby shrinking the discrepancy while quantifying uncertainty in $\phi$ and induced predictions. This mirrors standard Bayesian calibration steps \citep{wade2022bayesian} and enables casting Bayes-optimal prompting as posterior expected-utility maximization.

\begin{theorem}[Bayes-optimal prompting]\label{thm:bayes-opt}
Let $\phi\in\Phi$ (compact) denote the prompt/control of the synthesizer 
$S_\phi$, $T(\mathcal{R})$ be the target summaries from the real data, 
and the Bayesian calibration posterior for $\phi$ be
\[
\pi(\phi\mid T(\mathcal{R})) \;\propto\; 
p\!\left(T(\mathcal{R})\mid T(S_\phi)\right)\,p(\phi).
\]
Fix a bounded, jointly upper semicontinuous utility 
$U(\phi, \phi';\, T(\mathcal{R}))$ in $(\phi,\phi') \in \Phi \times \Phi$,
measuring the performance of action $\phi$ when the true synthesizer 
parameter is $\phi'$.\footnote{For instance, 
$U(\phi, \phi';\,T(\mathcal{R}))$ could trade off fidelity to 
$T(\mathcal{R})$ and task utility, e.g.,
$
U(\phi,\phi';\,T(\mathcal{R})) \;=\;
  -\alpha\,\Delta\!\bigl(T(\mathcal{R}),\,T(S_\phi)\bigr)
  \;+\;\beta\,\mathcal{U}_{\mathrm{task}}(S_{\phi'}),
$
with $\alpha,\beta\geq 0$ and $\Delta,\mathcal{U}_{\mathrm{task}}$ 
measurable, so that the expectation over $\phi'\sim\pi$ non-trivially 
integrates the task utility over posterior uncertainty.}
Then the Bayes-optimal prompt
\[
\phi^\star \;\in\; 
\arg\max_{\phi\in\Phi}\; 
\mathbb{E}_{\phi'\sim \pi(\cdot\mid T(\mathcal{R}))}\!
\big[\,U(\phi,\,\phi';\,T(\mathcal{R}))\,\big]
\]
exists (by compactness of $\Phi$ and upper semicontinuity of $U$) and 
minimizes posterior expected regret against every admissible 
\small
$\tilde\phi\in\Phi$:
\[
\mathbb{E}_{\phi'\sim \pi(\cdot\mid T(\mathcal{R}))}\!
\big[\,
  U(\tilde\phi,\,\phi';\,T(\mathcal{R})) 
  - U(\phi^\star,\,\phi';\,T(\mathcal{R}))
\,\big]\;\le\;0.
\]
\end{theorem}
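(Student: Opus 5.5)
The argument has two parts: existence of a maximizer via the Weierstrass extreme value theorem for upper semicontinuous functions, and the regret bound as an immediate consequence of maximality. Throughout, write $\pi$ for $\pi(\cdot\mid T(\mathcal{R}))$ and define the posterior expected utility
\[
V(\phi) \;:=\; \mathbb{E}_{\phi'\sim\pi}\big[\,U(\phi,\phi';T(\mathcal{R}))\,\big].
\]
We assume the posterior is a proper probability measure on the Borel sets of $\Phi$, i.e.\ the normalizing constant $\int_\Phi p(T(\mathcal{R})\mid T(S_\phi))\,p(\phi)\,d\phi$ is finite and positive. Since $U$ is bounded and jointly upper semicontinuous, it is Borel measurable on $\Phi\times\Phi$. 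Hence $V(\phi)$ is well defined and finite for every $\phi$, with $|V|\le \sup|U|$.

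The first step is to show that $V$ is upper semicontinuous on $\Phi$. Take $\phi_n\to\phi$. Joint upper semicontinuity gives, for each fixed $\phi'$,
\[
\limsup_n U(\phi_n,\phi') \;\le\; U(\phi,\phi').
\]
Because $U\le M:=\sup|U|$ and the constant $M$ is $\pi$-integrable (since $\pi$ is a probability measure), the reverse Fatou lemma applies:
\[
\limsup_n V(\phi_n) \;\le\; \int \limsup_n U(\phi_n,\phi')\,\pi(d\phi') \;\le\; V(\phi).
\]
This is the step needing the most care. It is exactly where boundedness, through a dominating constant, and properness of $\pi$ are used. I would check the measurability of $U(\phi_n,\cdot)$ here, though it follows from upper semicontinuity. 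If $\Phi$ is not metrizable, the same argument runs with nets: I would use that an upper semicontinuous function on a compact space is a decreasing limit of continuous functions, and pass to the limit by monotone convergence.

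Second, existence follows from the extreme value theorem. An upper semicontinuous function on a nonempty compact set attains its supremum. Take a maximizing sequence and extract a convergent subsequence $\phi_{n_k}\to\phi^\star$ by compactness; upper semicontinuity then gives $V(\phi^\star)\ge \sup_\Phi V$. So $\arg\max_\Phi V\neq\emptyset$, and any element is a valid $\phi^\star$.

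Third, the regret inequality is immediate from linearity of the expectation, which is legitimate because both integrands are bounded:
\[
\mathbb{E}_{\phi'\sim\pi}\big[U(\tilde\phi,\phi')-U(\phi^\star,\phi')\big] \;=\; V(\tilde\phi)-V(\phi^\star) \;\le\; 0
\]
for every $\tilde\phi\in\Phi$, by maximality of $\phi^\star$.

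Two side remarks. The example utility in the footnote satisfies the hypotheses whenever $-\Delta(T(\mathcal{R}),T(S_\phi))$ and $\mathcal{U}_{\mathrm{task}}(S_{\phi'})$ are bounded and upper semicontinuous in their arguments. Neither the specific form of the likelihood nor the prior matters beyond properness of the posterior.
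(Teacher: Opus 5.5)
Your proof is correct and follows the same route as the paper's sketch: upper semicontinuity of the posterior expected utility $\mathcal{V}$, existence of a maximizer by the extreme value theorem on compact $\Phi$, and the regret inequality as a direct restatement of $\mathcal{V}(\phi^\star)\ge\mathcal{V}(\tilde\phi)$. Your reverse Fatou argument, with the constant bound as dominating function and the posterior assumed proper, supplies the justification for upper semicontinuity of $\mathcal{V}$ that the paper only asserts.
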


\begin{proof}[Sketch]
Define the posterior expected utility functional
$\mathcal{V}(\phi) := 
\mathbb{E}_{\phi'\sim\pi(\cdot\mid T(\mathcal{R}))}
[U(\phi,\phi';\,T(\mathcal{R}))]$.
Since $U$ is bounded and jointly upper semicontinuous, 
$\mathcal{V}(\phi)$ is upper semicontinuous in $\phi$.  
Compactness of $\Phi$ then guarantees existence of 
$\phi^\star \in \arg\max_{\phi\in\Phi}\mathcal{V}(\phi)$ 
by the extreme value theorem.  
By standard Bayesian decision theory 
\citep{berger2013statistical}, maximising $\mathcal{V}$ 
is precisely the Bayes rule: for every $\tilde\phi\in\Phi$,
\[
\mathcal{V}(\phi^\star) \;\ge\; \mathcal{V}(\tilde\phi),
\]
which is equivalent to the stated regret inequality.  
Under strict concavity of $\mathcal{V}$ and convexity of $\Phi$, 
$\phi^\star$ is unique. $\blacksquare$
\end{proof}

This standard result from Bayesian decision theory implies that the self-reinforcing feedback mechanism can target the Bayes-optimal prompt, which is central to the following proposition.

\begin{proposition}\label{prop}
Let $\mathcal{V}(\phi) := 
\mathbb{E}_{\phi'\sim \pi(\cdot\mid T(\mathcal{R}))}
\bigl[U(\phi,\,\phi';\,T(\mathcal{R}))\bigr]$
denote the posterior expected utility, with $U$ as in 
Theorem~\ref{thm:bayes-opt} depending on \emph{both} the 
action $\phi$ and the uncertain parameter $\phi'$.
Suppose the self-reinforcing synthesis loop generates batches 
$\mathcal{S}_i$ yielding a stochastic supergradient $g_i$ 
satisfying:
\begin{itemize}
  \item \textbf{(Unbiasedness)} 
    $\mathbb{E}\bigl[g_i \mid \mathcal{F}_{i-1}\bigr] 
     \;\in\; \partial^+_\phi\,\mathcal{V}(\phi_{i-1})$,
    where $\partial^+_\phi$ denotes the superdifferential 
    with respect to $\phi$;
  \item \textbf{(Bounded variance)} 
    $\mathbb{E}\bigl[\|g_i\|^2 \mid \mathcal{F}_{i-1}\bigr] 
     \;\leq\; C$ 
    for some constant $C < \infty$.
\end{itemize}
Let the update be 
$\phi_i = \Pi_\Phi\{\phi_{i-1} + \eta_i\, g_i\}$,
where $\Pi_\Phi$ is the projection onto a compact \emph{convex} 
set $\Phi$, with non-increasing step-sizes $\eta_i > 0$ 
satisfying $\sum_i \eta_i = \infty$ and 
$\sum_i \eta_i^2 < \infty$. 
Then $\phi_i$ converges almost surely to the set of stationary 
points of $\mathcal{V}$ on $\Phi$, i.e.,
\[
  \mathrm{dist}\!\left(\phi_i,\, 
  \Phi_{\mathrm{stat}}\right) \;\to\; 0 
  \quad \text{a.s.},
\]
where 
$\Phi_{\mathrm{stat}} := 
\{\phi \in \Phi : 
 0 \in \partial^+_\phi\,\mathcal{V}(\phi) + 
 \mathcal{N}_\Phi(\phi)\}$
and $\mathcal{N}_\Phi(\phi)$ is the normal cone of $\Phi$ 
at $\phi$.
If, moreover, $\mathcal{V}$ is strictly concave on $\Phi$, 
then $\Phi^\star = \{\phi^\star\}$ is the unique stationary 
point and
\[
  \mathrm{dist}(\phi_i, \Phi^\star) 
  \;=\; \|\phi_i - \phi^\star\| 
  \;\to\; 0 \quad \text{a.s.}
\]
\end{proposition}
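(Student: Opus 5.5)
The plan is the standard projected stochastic (super)gradient ascent argument, in two regimes. For the general case we use the ODE method of stochastic approximation; for the strictly concave case we use a direct Lyapunov/Robbins--Siegmund argument. Throughout, $\mathcal{F}_{i-1}$ denotes the natural filtration generated by the loop up to batch $i-1$. We write
\[
g_i = \bar g_i + \xi_i,\qquad \bar g_i := \mathbb{E}[g_i\mid\mathcal{F}_{i-1}] \in \partial^+_\phi\mathcal{V}(\phi_{i-1}).
\]
Then $\xi_i$ is a martingale difference sequence with $\mathbb{E}[\|\xi_i\|^2\mid\mathcal{F}_{i-1}]\le C$ by the bounded-variance assumption.

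\textbf{Step 1 (noise control).} Consider $M_n=\sum_{i\le n}\eta_i\xi_i$. Since $\sum_i\eta_i^2<\infty$, this is an $L^2$-bounded martingale, so it converges a.s. Consequently the tail sums $\sup_{m\ge n}\|M_m-M_n\|\to0$ a.s. This is the Kushner--Clark condition.

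\textbf{Step 2 (general case).} We write the update as
\[
\phi_i=\phi_{i-1}+\eta_i(\bar g_i+\xi_i+z_i),\qquad z_i\in-\mathcal{N}_\Phi(\phi_i).
\]
This is a legitimate decomposition by the variational characterization of the projection onto the convex set $\Phi$. The interpolated trajectory is then an asymptotic pseudo-trajectory of the projected differential inclusion
\[
\dot\phi\in\partial^+\mathcal{V}(\phi)-\mathcal{N}_\Phi(\phi).
\]
We invoke the Benaïm--Hofbauer--Sorin theory. Its hypotheses require the set-valued map $\phi\mapsto\partial^+\mathcal{V}(\phi)$ to be upper semicontinuous, with nonempty, convex, compact values. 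This holds if $\mathcal{V}$ is, e.g., concave or Clarke-regular (locally Lipschitz), which we will add as a standing regularity assumption. Boundedness of iterates is automatic from compactness of $\Phi$.

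We then use $-\mathcal{V}$ as a Lyapunov function. Along solutions, $\mathcal{V}$ is nondecreasing, and strictly increasing off $\Phi_{\mathrm{stat}}$. Provided $\mathcal{V}(\Phi_{\mathrm{stat}})$ has empty interior (a Sard-type condition), the limit set of $(\phi_i)$ is contained in $\Phi_{\mathrm{stat}}$. This gives $\mathrm{dist}(\phi_i,\Phi_{\mathrm{stat}})\to0$ a.s.

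\textbf{Main obstacle.} The hardest step is Step 2. The statement as given only assumes upper semicontinuity of $U$, and that is too weak to make the superdifferential well behaved. We expect to need to make explicit a regularity hypothesis, namely local Lipschitzness of $\mathcal{V}$ together with the Sard-type condition, under which the Lyapunov argument goes through. Our first attempt will be to assume $\mathcal{V}$ is locally Lipschitz and path-differentiable.

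\textbf{Step 3 (strictly concave case).} Concavity makes the cleaner argument available. Let $\phi^\star$ be the maximizer, which exists by Theorem~\ref{thm:bayes-opt}. Because $\Pi_\Phi$ is nonexpansive and $\Pi_\Phi\phi^\star=\phi^\star$,
\[
\|\phi_i-\phi^\star\|^2\le\|\phi_{i-1}-\phi^\star\|^2+2\eta_i\langle g_i,\phi_{i-1}-\phi^\star\rangle+\eta_i^2\|g_i\|^2 .
\]
Taking conditional expectations, the supergradient inequality gives
\[
\langle\bar g_i,\phi_{i-1}-\phi^\star\rangle\le\mathcal{V}(\phi_{i-1})-\mathcal{V}(\phi^\star)\le0 .
\]
Hence
\[
\mathbb{E}\bigl[\|\phi_i-\phi^\star\|^2\mid\mathcal{F}_{i-1}\bigr]\le\|\phi_{i-1}-\phi^\star\|^2-2\eta_i\bigl(\mathcal{V}(\phi^\star)-\mathcal{V}(\phi_{i-1})\bigr)+C\eta_i^2 .
\]
The Robbins--Siegmund theorem then yields two conclusions a.s.: $\|\phi_i-\phi^\star\|$ converges, and
\[
\sum_i\eta_i\bigl(\mathcal{V}(\phi^\star)-\mathcal{V}(\phi_{i-1})\bigr)<\infty .
\]
Since $\sum_i\eta_i=\infty$, we get $\liminf_i\mathcal{V}(\phi_i)=\mathcal{V}(\phi^\star)$. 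Now use upper semicontinuity of $\mathcal{V}$ and compactness of $\Phi$ along a subsequence: any limit point $\bar\phi$ of that subsequence satisfies $\mathcal{V}(\bar\phi)\ge\mathcal{V}(\phi^\star)$. Uniqueness of the maximizer, from strict concavity, forces $\bar\phi=\phi^\star$. So $\liminf_i\|\phi_i-\phi^\star\|=0$, and since $\|\phi_i-\phi^\star\|$ converges, $\|\phi_i-\phi^\star\|\to0$ a.s.

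Finally, $\Phi_{\mathrm{stat}}=\{\phi^\star\}$. For concave $\mathcal{V}$ on convex $\Phi$, the condition $0\in\partial^+\mathcal{V}(\phi)+\mathcal{N}_\Phi(\phi)$ is exactly the first-order optimality condition. This step needs no extra regularity.
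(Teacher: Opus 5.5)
Your argument is correct in substance, and for the second claim it takes a different route from the paper.

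\textbf{Comparison with the paper.} The paper's sketch only asserts that the iteration meets the hypotheses of ``the projected stochastic approximation theorem.'' It then obtains the concave case by collapsing $\Phi_{\mathrm{stat}}$ to $\phi^\star$.

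Your Step~2 is that same ODE route made explicit. You are right that it does not close under the stated hypotheses alone. The theorem needs an upper semicontinuous, compact-convex-valued mean field, a chain rule for $\mathcal{V}$ along trajectories, and a Sard-type condition. None of these follows from upper semicontinuity of $U$, and the paper leaves them unverified.

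There is one technical wrinkle in Step~2. The map $\partial^+\mathcal{V}-\mathcal{N}_\Phi$ is unbounded-valued, so Bena\"im--Hofbauer--Sorin does not apply verbatim. Use a projected framework such as Kushner--Yin's, together with the bound $\|z_i\|\le 2\|g_i\|$, which follows from nonexpansiveness of $\Pi_\Phi$.

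Your Step~3 instead proves the strictly concave claim directly, via Robbins--Siegmund on $\|\phi_i-\phi^\star\|^2$. It uses only strict concavity and the upper semicontinuity of $\mathcal{V}$ inherited from $U$. That half is therefore a complete proof, independent of Steps~1--2 and of any added regularity. In the paper's route, by contrast, it inherits every unverified condition of the general case. Your argument also gives $\sum_i\eta_i(\mathcal{V}(\phi^\star)-\mathcal{V}(\phi_{i-1}))<\infty$ for free.

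\textbf{Two things to fix.}

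First, the sign of the normal cone. For \emph{maximizing} a concave $\mathcal{V}$ over convex $\Phi$, the first-order condition is $\partial^+\mathcal{V}(\phi)\cap\mathcal{N}_\Phi(\phi)\neq\emptyset$, i.e.\ $0\in\partial^+\mathcal{V}(\phi)-\mathcal{N}_\Phi(\phi)$. With the statement's $+$ sign, take $\Phi=[0,1]$ and $\mathcal{V}(\phi)=-(\phi-2)^2$. Then $\Phi_{\mathrm{stat}}=\{0\}$, while $\phi^\star=1$ and even the noiseless iterates converge to $1$.

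So your closing sentence is false as written. The statement carries a sign typo, and the paper's sketch overlooks it as well. Your own inclusion $\dot\phi\in\partial^+\mathcal{V}(\phi)-\mathcal{N}_\Phi(\phi)$ in Step~2 already has the correctly signed rest set. Define $\Phi_{\mathrm{stat}}$ with $-\mathcal{N}_\Phi$ explicitly.

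Second, $\sum_i\eta_i=\infty$ together with summability yields $\liminf_i(\mathcal{V}(\phi^\star)-\mathcal{V}(\phi_i))=0$, i.e.\ $\limsup_i\mathcal{V}(\phi_i)=\mathcal{V}(\phi^\star)$. It does not yield $\liminf_i\mathcal{V}(\phi_i)=\mathcal{V}(\phi^\star)$. The subsequence argument you then run uses exactly the former, so only the wording needs changing.
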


\begin{proof}[Sketch]
Define the posterior expected utility
$\mathcal{V}(\phi) :=
\mathbb{E}_{\phi' \sim \pi(\cdot \mid T(\mathcal{R}))}
[U(\phi,\phi';\,T(\mathcal{R}))]$.
Under assumptions (Unbiasedness) and (Bounded variance) 
together with the Robbins--Monro step-size conditions 
\citep{robbins1951stochastic}, the projected stochastic 
supergradient ascent iterates $\phi_i = 
\Pi_\Phi\{\phi_{i-1} + \eta_i g_i\}$ satisfy the 
conditions of the projected stochastic approximation 
theorem. This guarantees 
almost sure convergence to the set of stationary points 
$\Phi_{\mathrm{stat}}$ of $\mathcal{V}$ on the compact 
convex set $\Phi$. Under strict concavity of $\mathcal{V}$, the set 
$\Phi_{\mathrm{stat}}$ reduces to the unique global 
maximiser $\phi^\star$, so 
$\phi_i \to \phi^\star$ almost surely. In our setting, $g_i$ is constructed from the feedback 
metrics $\mathcal{F}_i$ (statistical consistency, 
correlation preservation, and KS-test distances) using 
differentiable surrogates for $\Delta$ and 
$\mathcal{U}_{\mathrm{task}}$ (e.g., smooth divergence 
approximations and differentiable proxy metrics). 
The (Unbiasedness) and (Bounded variance) assumptions hold 
provided these surrogates are unbiased estimators of 
$\partial^+_\phi \mathcal{V}$ with uniformly bounded 
second moments --- conditions that must be verified for 
any specific choice of surrogate. $\blacksquare$
\end{proof}

\section{Experiments}
\subsection{Experimental Settings} \label{expset}

\textbf{Datasets}. We use four real-world classification datasets from diverse domains, which are \textit{Travel, Sick, Heloc,} and \textit{Thyroid}, and four synthetic datasets with explicit inter-attribute correlations, which are \textit{Consumer Behavior, Health Metrics, Real Estate}, and \textit{Social Network}. Details of the datasets are provided in Appendix \ref{Appendix-data}. As with EPIC \citep{epic}, each dataset is randomly split into 80\% training and 20\% test sets.

\noindent \textbf{Comparative Methods}. We compare RDDG against representative generative methods for tabular data synthesis, including GReaT \citep{great}, EPIC \citep{epic}, TabDDPM \citep{tabddpm}, CDTD \citep{CDTD}, REalTabFormer \citep{REalTabFormer}, and ADS-GAN \citep{ag}. We also report the vanilla classification performance without using any synthetic data, denoted as ``Original''.

\noindent \textbf{Foundation Models}. In our experiments, GPT-3.5 (GPT-3.5-turbo-0125) is used as our default LLM. Additionally, on real datasets, we will report the performance of RDDG alongside other representative LLMs, i.e., Llama 3.0 and Mistral Max.

\noindent \textbf{Configurations}. For core set selection, details about MLP implementation and training configurations are provided in Table \ref{coresetconfig} in Appendix \ref{Appendix-data}.  

\subsection{Experimental Results}
\subsubsection{Imbalanced Classification Results}

\begin{table*}[htb]
\centering
\begin{minipage}{0.49\textwidth}
\centering
\resizebox{\textwidth}{!}{
\begin{tabular}{llcccc}
\toprule
\textbf{Dataset} & \textbf{Method} & \textbf{Macro-F1} & \textbf{BAL ACC} & \textbf{Sensitivity} & \textbf{Specificity} \\
\midrule
\multirow{7}{*}{Travel}
 & Original        & 58.12$\pm$2.04 & 71.21$\pm$1.56 & 56.48$\pm$2.81 & 85.63$\pm$0.85 \\
 & ADS-GAN         & 56.07$\pm$8.25 & 68.94$\pm$5.12 & 53.16$\pm$3.46 & \textbf{88.25$\pm$3.31} \\
 & REalTabFormer   & 53.25$\pm$4.10 & 67.70$\pm$2.69 & 58.42$\pm$4.19 & 86.82$\pm$1.31 \\
 & GReaT           & 60.95$\pm$2.59 & 72.86$\pm$1.80 & 58.80$\pm$3.69 & \underline{86.92$\pm$0.79} \\
 & TabDDPM & 65.32$\pm$1.95 & 73.19$\pm$2.15 & 71.98$\pm$2.89 & 84.14$\pm$1.56 \\
& CDTD      & 66.32$\pm$2.13 & 74.82$\pm$0.91 & 72.06$\pm$2.19 & 85.23$\pm$3.12 \\
 & EPIC            & \underline{66.65$\pm$2.53} & \underline{78.23$\pm$2.10} & \underline{78.00$\pm$4.59} & 78.46$\pm$2.50 \\
 & \textbf{RDDG (Ours)} & \textbf{68.63$\pm$2.12} & \textbf{79.67$\pm$4.68} & \textbf{78.23$\pm$1.23} & 82.67$\pm$2.56 \\
\midrule
\multirow{7}{*}{Sick}
 & Original        & 87.82$\pm$2.46 & 91.22$\pm$0.93 & 82.84$\pm$1.76 & 99.61$\pm$0.28 \\
 & ADS-GAN         & 87.52$\pm$1.64 & 89.82$\pm$0.53 & 79.86$\pm$0.97 & \textbf{99.82$\pm$0.16} \\
 & REalTabFormer   & 85.17$\pm$2.04 & 89.30$\pm$1.16 & 79.02$\pm$2.28 & 99.57$\pm$0.12 \\
 & GReaT           & 87.23$\pm$1.86 & 90.83$\pm$1.09 & 82.06$\pm$2.10 & \underline{99.60$\pm$0.09} \\
 & TabDDPM & \underline{88.16$\pm$2.79} & 91.89$\pm$1.52 & 84.24$\pm$2.90 & 99.55$\pm$0.17 \\
& CDTD      & \textbf{89.63$\pm$1.71} & \underline{93.25$\pm$0.97} & \underline{86.96$\pm$1.87} & 99.53$\pm$0.12 \\
 & EPIC            & 85.08$\pm$1.89 & 92.45$\pm$0.68 & 85.98$\pm$1.31 & 98.93$\pm$0.27 \\
 & \textbf{RDDG (Ours)} & 87.99$\pm$0.91 & \textbf{93.62$\pm$0.95} & \textbf{88.04$\pm$1.93} & 99.20$\pm$0.07 \\
\midrule
\multirow{7}{*}{Heloc}
 & Original        & \textbf{71.01$\pm$0.47} & 72.21$\pm$0.37 & 67.19$\pm$0.86 & 78.52$\pm$0.38 \\
 & ADS-GAN         & 70.00$\pm$0.29 & 71.08$\pm$0.24 & 67.68$\pm$0.28 & 78.52$\pm$0.62 \\
 & REalTabFormer   & 70.09$\pm$0.19 & 72.28$\pm$0.86 & 67.54$\pm$0.35 & \underline{78.82$\pm$0.46} \\
 & GReaT           & 70.25$\pm$0.33 & 72.16$\pm$0.24 & 66.22$\pm$0.49 & \textbf{79.70$\pm$0.21} \\
 & TabDDPM & \underline{70.33$\pm$0.22} & 72.49$\pm$0.19 & \textbf{67.89$\pm$0.39} & 77.09$\pm$0.40 \\
& CDTD      & 70.18$\pm$0.48 & 72.40$\pm$0.35 & 67.59$\pm$0.79 & 77.21$\pm$0.32 \\
 & EPIC            & 70.08$\pm$0.51 & \underline{72.52$\pm$0.38} & 66.90$\pm$0.75 & 78.13$\pm$0.17 \\
 & \textbf{RDDG (Ours)} & 70.32$\pm$0.55 & \textbf{72.54$\pm$0.43} & \underline{67.72$\pm$0.78} & 77.35$\pm$0.13 \\
\midrule
\multirow{7}{*}{Thyroid}
 & Original        & 94.23$\pm$1.99 & 95.08$\pm$1.60 & 91.14$\pm$3.12 & 99.02$\pm$1.01 \\
 & ADS-GAN         & 76.40$\pm$6.58 & 81.14$\pm$4.55 & 62.27$\pm$9.10 & \textbf{100.00$\pm$0.00} \\
 & REalTabFormer   & 94.39$\pm$1.09 & 96.26$\pm$0.50 & 93.45$\pm$0.09 & 97.06$\pm$1.01 \\
 & GReaT           & 91.31$\pm$1.61 & 92.46$\pm$0.99 & 85.91$\pm$1.40 & 99.08$\pm$0.75 \\
 & TabDDPM & \underline{96.05$\pm$2.22} & \textbf{96.94$\pm$0.77} & \textbf{94.74$\pm$0.00} & 99.14$\pm$1.53 \\
& CDTD      & 94.66$\pm$2.49 & 96.37$\pm$0.94 & \underline{94.47$\pm$1.18} & 98.28$\pm$1.77 \\
 & EPIC            & 94.67$\pm$1.96 & 96.06$\pm$1.35 & 93.42$\pm$2.34 & 98.71$\pm$0.77 \\
 & \textbf{RDDG (Ours)} & \textbf{96.58$\pm$1.27} & \underline{96.71$\pm$1.17} & 93.42$\pm$2.34 & \underline{99.95$\pm$0.00} \\
\bottomrule
\end{tabular}
}
\vspace{-6pt}
\caption{Imbalanced classification performance of different methods on four real datasets.}
\label{table1}
\end{minipage}
\hfill
\begin{minipage}{0.49\textwidth}
\centering
\resizebox{\textwidth}{!}{
\begin{tabular}{llcccc}
\toprule
\textbf{Dataset} & \textbf{Method} & \textbf{Macro-F1} & \textbf{BAL ACC} & \textbf{Sensitivity} & \textbf{Specificity} \\
\midrule
\multirow{7}{*}{\shortstack{Consumer\\Behavior}} 
& Original & 66.71$\pm$2.07 & 77.00$\pm$2.14 & 63.37$\pm$2.13 & 86.51$\pm$0.05 \\
& ADS-GAN & 65.77$\pm$1.12 & 77.53$\pm$1.08 & 61.27$\pm$1.04 & 84.43$\pm$1.03 \\
& REalTabFormer & 65.44$\pm$2.51 & 75.20$\pm$1.47 & 62.21$\pm$3.68 & 86.33$\pm$2.91 \\
& GReaT & 64.76$\pm$3.65 & \underline{77.80$\pm$2.33} & 57.07$\pm$1.30 & 80.82$\pm$3.76 \\
& TabDDPM & \underline{67.90$\pm$1.54} & 75.67$\pm$1.44 & 63.21$\pm$2.29 & \underline{88.90$\pm$3.40} \\
& CDTD      & 67.89$\pm$5.09 & 73.60$\pm$3.82 & 62.19$\pm$5.88 & 88.79$\pm$5.97 \\
& EPIC & 67.32$\pm$2.24 & 77.20$\pm$4.24 & \underline{63.88$\pm$2.10} & 87.20$\pm$3.44 \\
& \textbf{RDDG (Ours)} & \textbf{68.99$\pm$2.18} & \textbf{79.60$\pm$2.41} & \textbf{66.75$\pm$2.76} & \textbf{90.50$\pm$2.34} \\
\midrule
\multirow{7}{*}{\shortstack{Health\\Metrics}} 
& Original & 90.95$\pm$1.35 & 95.38$\pm$1.05 & 84.42$\pm$1.43 & 93.07$\pm$1.27 \\
& ADS-GAN & 89.26$\pm$1.64 & 95.15$\pm$2.79 & 88.83$\pm$3.82 & 94.15$\pm$3.45 \\
& REalTabFormer & 93.42$\pm$2.34 & 95.33$\pm$2.66 & 92.64$\pm$1.42 & 96.25$\pm$2.27 \\
& GReaT & 93.29$\pm$2.49 & \underline{96.12$\pm$2.57} & 92.72$\pm$1.36 & 96.40$\pm$1.78 \\
& TabDDPM & \textbf{96.09$\pm$0.99} & 96.00$\pm$1.22 & \textbf{96.08$\pm$0.98} & \textbf{98.14$\pm$0.69} \\
& CDTD      & \underline{95.49$\pm$0.78} & 95.50$\pm$0.95 & \underline{95.47$\pm$0.79} & \underline{97.77$\pm$0.53} \\
& EPIC & 93.89$\pm$2.41 & 95.10$\pm$2.89 & 91.56$\pm$1.90 & 95.56$\pm$2.21 \\
& \textbf{RDDG (Ours)} & 95.40$\pm$1.73 & \textbf{96.74$\pm$2.21} & 94.55$\pm$2.53 & 97.71$\pm$1.64 \\
\midrule
\multirow{6}{*}{\shortstack{Real\\Estate}} 
& Original & 80.06$\pm$1.32 & 83.97$\pm$1.28 & 79.75$\pm$1.44 & 92.75$\pm$1.17 \\
& ADS-GAN & 82.44$\pm$1.26 & 86.17$\pm$1.12 & 81.51$\pm$1.38 & \textbf{95.51$\pm$1.35} \\
& REalTabFormer & 81.18$\pm$2.26 & 84.97$\pm$2.45 & 80.74$\pm$2.54 & 90.74$\pm$2.19 \\
& GReaT & 85.16$\pm$1.26 & \underline{88.38$\pm$1.28} & 82.03$\pm$3.27 & 92.03$\pm$2.83 \\
& TabDDPM & 76.00$\pm$3.06 & 87.87$\pm$3.17 & \textbf{86.67$\pm$6.88} & 89.08$\pm$0.71 \\
& CDTD      & 72.98$\pm$3.41 & 85.34$\pm$3.06 & 81.90$\pm$6.29 & 88.77$\pm$0.54 \\
& EPIC & \underline{85.21$\pm$2.10} & 86.98$\pm$4.17 & 83.45$\pm$4.32 & 93.21$\pm$3.42 \\
& \textbf{RDDG (Ours)} & \textbf{88.70$\pm$1.72} & \textbf{88.50$\pm$1.12} & \underline{85.43$\pm$1.47} & \underline{95.21$\pm$2.10} \\
\midrule
\multirow{6}{*}{\shortstack{Social\\Network}} 
& Original & 87.87$\pm$2.15 & 96.15$\pm$2.34 & 83.63$\pm$2.67 & 98.18$\pm$3.01 \\
& ADS-GAN & 88.89$\pm$1.28 & 96.20$\pm$1.06 & 83.63$\pm$2.19 & 97.18$\pm$2.68 \\
& REalTabFormer & 95.69$\pm$3.09 & 98.00$\pm$3.17 & 95.52$\pm$2.06 & \textbf{99.49$\pm$3.58} \\
& GReaT & 96.88$\pm$3.36 & \underline{98.80$\pm$2.01} & 96.87$\pm$4.30 & 98.42$\pm$3.22 \\
& TabDDPM & 92.16$\pm$5.87 & 76.26$\pm$16.95 & 93.30$\pm$4.35 & 74.80$\pm$16.37 \\
& CDTD      & \textbf{98.99$\pm$0.89} & 97.40$\pm$1.45 & \textbf{99.00$\pm$0.89} & 97.16$\pm$1.71 \\
& EPIC & 86.87$\pm$2.21 & 96.30$\pm$2.57 & 83.76$\pm$3.13 & 98.19$\pm$3.29 \\
& \textbf{RDDG (Ours)} & \underline{97.12$\pm$2.13} & \textbf{98.89$\pm$2.73} & \underline{97.66$\pm$2.17} & \underline{99.24$\pm$3.21} \\
\bottomrule
\end{tabular}
}
\vspace{-1pt}
\caption{Imbalanced classification performance of different methods on four synthetic datasets.}
\label{table2}
\end{minipage}
\end{table*}

We evaluate imbalanced classification using weighted Macro-F1, Balanced Accuracy (BAL ACC), Sensitivity, and Specificity metrics. In Appendix \ref{Appendix-data}, we provide details about these evaluation metrics. As with EPIC, we use four mainstream classification models —XGBoost \citep{xg}, CatBoost \citep{cb}, LightGBM \citep{lg}, and GBDT — as baselines. The classifiers' performance is averaged to ensure robustness.

\noindent \textbf{Results on the Real-world Datasets}. Table \ref{table1} reports the classification results for four real-world datasets under imbalanced conditions. We observe that our proposed RDDG algorithm achieves strong BAL ACC and Sensitivity scores across most datasets. Moreover, in terms of the weighted Macro-F1 metric, RDDG also obtains the best performance on the Travel and Thyroid datasets. Heloc, unlike the other datasets, maintains class balance with approximately equal sample sizes. For such balanced datasets, BAL ACC is the most suitable evaluation metric, and RDDG outperforms all other methods on this metric. The only exception is the Sick dataset, where CDTD and TabDDPM obtain the best weighted Macro-F1 scores, yet RDDG still achieves the best Sensitivity score, which is an important criterion in imbalanced learning that assesses the performance on the minority classes.

Comparing RDDG with EPIC, we observe that on the imbalanced datasets, the average performance improvements are 2.27\% in weighted Macro-F1, 1.09\% in BAL ACC, 0.86\% in Sensitivity, and 1.91\% in Specificity, all of which are statistically significant. 

\noindent \textbf{Results on the Synthetic Datasets}. Table \ref{table2} reports the classification performance on four synthetic datasets with explicit inter-attribute correlations. In Table \ref{table2}, we observe that RDDG consistently obtains the best BAL ACC scores across all four datasets. It also achieves the best weighted Macro-F1 scores on the Consumer Behavior and Real Estate datasets, and ranks among the Top-2 on the Social Network dataset. The only exception is the Health Metrics dataset, in which TabDDPM and CDTD are the Top-2 performers, yet the gap between RDDG and CDTD is small. In terms of Sensitivity, RDDG is among the Top-2 performers on three out of the four datasets. On the Social Network dataset, we also observe a large variance in classification performance for TabDDPM, with performance varying substantially across classifiers, indicating that the utility of the generated samples differs across classification models. 

Comparing RDDG with EPIC, the average performance improvements are 2.04\% in weighted Macro-F1, 4.23\% in BAL ACC, 5.44\% in Sensitivity, and 2.13\% in Specificity.

In Tables \ref{table4} and \ref{table:epic_rddg} in Appendix \ref{Appendix-cls}, we also report the performance of EPIC and RDDG using other LLMs such as Llama 3.0 and Mistral Max, and the observations are generally consistent with GPT-3.5. Overall, these experiments validate the effectiveness of RDDG in generating high-quality samples for imbalanced classification.

\begin{figure*}[htbp]
    \centering
    \includegraphics[width=\textwidth]{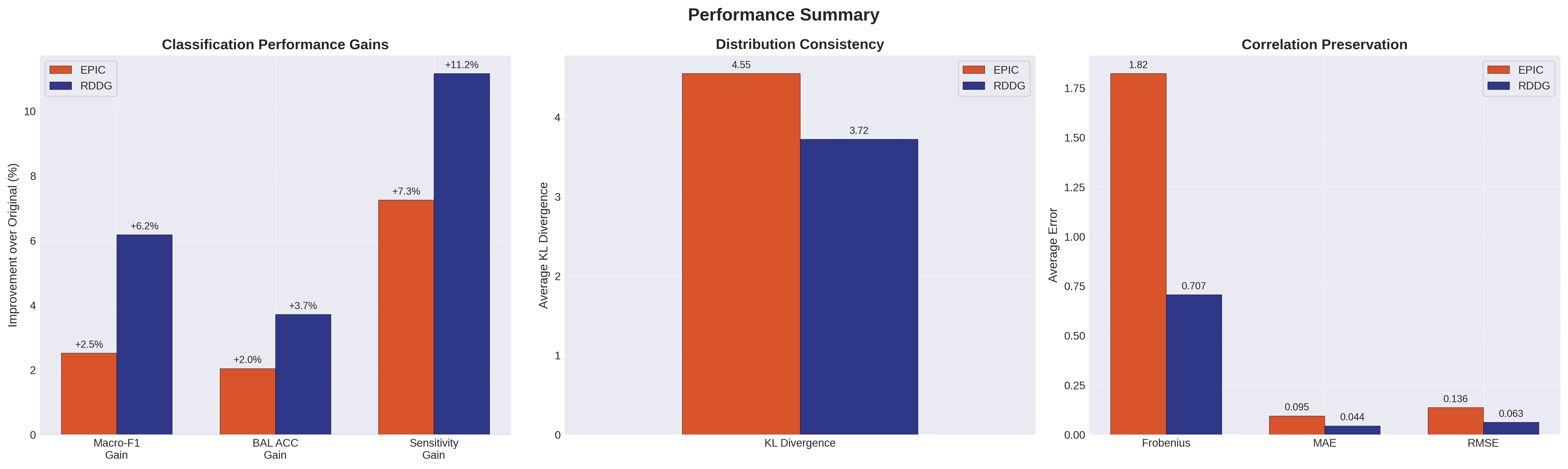}
    \vspace{-10pt}
    \caption{Overall performance summary comparing EPIC and RDDG across (a) classification performance gains (on the left, with higher values indicating better performance), (b) distribution consistency, and (c) correlation preservation metrics (center and right), where lower values indicate better performance.}
    \label{fig:overall_summary}
\end{figure*}

\begin{figure*}[htb!]
    \centering
    \includegraphics[width=\textwidth]{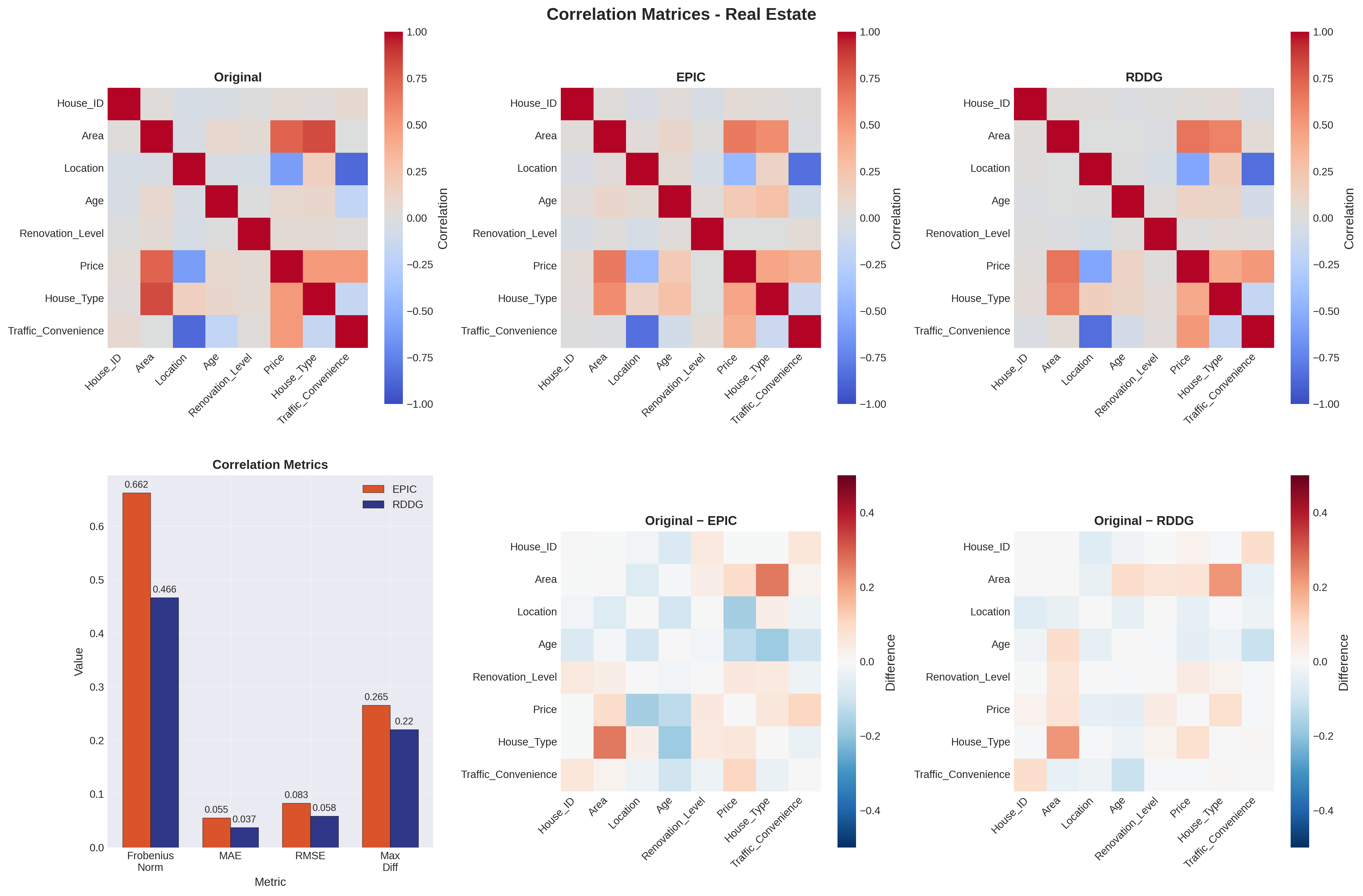}
    \vspace{-10pt}
    \caption{On the Real Estate dataset, RDDG demonstrates better correlation preservation than EPIC.}
    \label{fig:corr_realestate}
\end{figure*}

\subsubsection{Statistical Fidelity Evaluation}
To comprehensively evaluate the statistical fidelity of synthetic data generated by EPIC and RDDG, we examine both distribution consistency and inter-attribute correlation preservation, by employing Kullback-Leibler (KL) divergence for the former, and Frobenius norm, Mean Absolute Error (MAE) and Root Mean Square Error (RMSE) for the latter, more details are given in Appendix \ref{appendix:metrics}.

The overall performance summary (Figure~\ref{fig:overall_summary}) shows RDDG's consistent advantages across all evaluated metrics, outperforming EPIC on distribution consistency in 6 of 8 datasets and demonstrating a uniform superiority in correlation preservation. A comprehensive analysis, including detailed visualizations and dataset-specific results, is provided in Appendix~\ref{a:dist_consistency_corr}.

Specifically, RDDG achieves an 18.2\% reduction in overall KL divergence (3.72 vs. 4.55), indicating superior preservation of the original data's probabilistic structure (Figure~\ref{fig:kl_per_dataset} in ~\ref{a:dist_consistency_corr}). The method demonstrates particularly strong performance in complex datasets, with notable improvements in the Thyroid dataset (90.2\% reduction in KL divergence: 0.278 vs. 2.83) and Travel dataset (20.1\% reduction: 16.3 vs. 20.4). Distribution comparisons across representative datasets illustrate RDDG's superior ability to capture complex distributional shapes and modalities (Figure~\ref{fig:distributions_comparison}).

In terms of correlation preservation, RDDG exhibits remarkable performance with a 65.6\% improvement in overall Frobenius norm (0.827 vs. 2.40) and consistent advantages across all correlation metrics (Figures~\ref{fig:corr_thyroid} and~\ref{fig:corr_travel}). The MAE analysis indicates that RDDG preserves correlation structures, achieving 58.1\% higher accuracy than EPIC (0.048 vs. 0.115), and a 60.2\% lower RMSE (0.065 vs. 0.163).

In particular, on datasets with explicit inter-attribute correlations, such as Real Estate, we show that RDDG better captures and preserves inter-attribute correlations than EPIC, as shown in Figure \ref{fig:corr_realestate}. Additional fidelity analysis results are provided in Appendix \ref{a:dist_consistency_corr}. 

\subsection{Ablation Studies}

\paragraph{Effects of Core Set and Feedback Mechanism} To investigate the effect of the core set, we compare classification performance with randomly sampled subsets, on Travel and Thyroid. As shown in Table \ref{tab:travel_thyroid_results} in Appendix \ref{Appendix-ablation}, the core set algorithm achieves a substantial performance gain compared to randomly sampled subsets. Moreover, we also show the influence of our self-reinforcing feedback mechanism, which significantly improves the overall performance.  

\paragraph{Influence of Imbalance Ratio} To study the influence of imbalance ratio (IR) on the performance of EPIC and RDDG, we use the UCI segmentation dataset and artificially balance the ratios between the numbers of majority and minority class samples via an exponentially decaying strategy. As reported in Table \ref{ir_result} in Appendix \ref{Appendix-ablation}, RDDG consistently outperforms EPIC across all metrics for varying IR values. When the IR is high, the advantage of RDDG over EPIC is more substantial. 

\section{Conclusion}
In this work, we propose a dynamically guided in-context tabular data synthesis framework that comprises progressive CoT steps and a self-reinforcing feedback mechanism. By integrating the explicit functional dependency constraints discovered through relation mining and the dynamic feedback mechanism, our framework significantly improves both data fidelity and downstream imbalanced classification performance.

\section*{Limitations}

While RDDG demonstrates substantial improvements in tabular data synthesis, several limitations warrant consideration. First, our approach requires either external API calls or local deployment of a foundation model. Second, as an in-context learning framework, although it is training-free, our approach is bounded by the inherent capability of LLMs. Third, LLM token limitations constrain the volume of examples that can be processed simultaneously, necessitating batch-wise generation for large-scale synthesis tasks. However, this may affect the global consistency of generated samples across batches. 

\section*{Acknowledgments}
This work was partially supported by the MOE Liberal Arts and Social Sciences Foundation (No.~23YJAZH210), the Major Program of the National Social Science Foundation (No.~23\&ZD309), the Henan Provincial Center for Outstanding Overseas Scientists (No.~GZS2025004), and the High-Level Talent International Training Program of Henan Province (No.~GCC2025010). Julian Rodemann acknowledges funding support from the Federal Statistical Office of Germany within the cooperation project ``Machine Learning in Official Statistics'', as well as from the Bavarian Institute for Digital Transformation (bidt) and the Bavarian Academy of Sciences and Humanities (BAdW) through a graduate scholarship. Esteban Garcés Arias acknowledges support from the Mentoring Program at the Faculty of Mathematics, Informatics, and Statistics at LMU Munich and from the MCML (Munich Center for Machine Learning). We thank the anonymous reviewers, area chairs, and program committee members for their constructive feedback.

\section*{Ethics Statement}

We affirm that our research adheres to the \href{https://www.aclweb.org/portal/content/acl-code-ethics}{ACL Ethics Policy}. This work uses publicly available datasets and contains no personally identifiable information. We declare that there are no conflicts of interest that could potentially influence the outcomes, interpretations, or conclusions of this research. All funding sources supporting this study are acknowledged in the acknowledgments section. We have diligently documented our methodology, experiments, and results, and we commit to sharing our code, data, and other relevant resources to enhance reproducibility.

\bibliography{custom}

\clearpage

\onecolumn

\appendix

\section{Appendix}
\label{sec:appendix}

\subsection{Additional Imbalanced Classification Results} \label{Appendix-cls}

In our experiments, GPT-3.5 (GPT-3.5-turbo-0125) is used as our default LLM. While the original EPIC study \citep{epic} used GPT-3.5-turbo-0613 and GPT-3.5-turbo-16k-0613 models, in this work, we evaluate EPIC and our approach using GPT-3.5-turbo-0125, as it is the model currently available to us\footnote{https://platform.openai.com/docs/deprecations/2023-11-06-chat-model-updates. As of June 17, 2024, GPT-3.5-turbo-0613 and GPT-3.5-turbo-16k-0613 have been deprecated by OpenAI, and ``only existing users of these models will be able to continue using them''.}. In addition to GPT-3.5, we also report the performance of EPIC and RDDG using other LLMs, such as Llama 3.0 and Mistral Max, and present the results below. For each dataset, each method will generate \textit{1000} synthetic samples (the target threshold). 

\noindent \textbf{Impact of LLM Choice on the Imbalanced Classification Performance}. To investigate the impact of LLM choice on RDDG performance, we conduct additional classification experiments that incorporate the open-source LLMs Llama 3.0 and Mistral Max, as well as the proprietary GPT-3.5 model used in our approach. As shown in Table \ref{table4}, we evaluate our RDDG framework using these alternative LLMs for comparison. A comparative analysis reveals that Mistral Max underperforms the other two LLMs in classification accuracy, with Llama 3.0 slightly trailing GPT-3.5. In Table \ref{table:epic_rddg}, we also compare the performance of EPIC and RDDG under Llama 3.0 and Mistral Max. We observe that RDDG consistently outperforms EPIC across both LLMs, except for Thyroid when using Mistral. Overall, regardless of LLM choice, RDDG outperforms EPIC.

\begin{table*}[htbp] 
\centering
\resizebox{0.55\textwidth}{!}{
\begin{tabular}{llccccc}
\toprule
\textbf{Dataset} & \textbf{Method} & \textbf{\#syn} & \textbf{Macro-F1} & \textbf{BAL ACC} & \textbf{Sensitivity} & \textbf{Specificity} \\
\midrule
\multirow{4}{*}{Travel}
 & Original        & -  & 58.12$\pm$2.04 & 71.21$\pm$1.56 & 58.12$\pm$2.04 & \textbf{85.63 $\pm$ 0.85} \\
 & Mistral         & 1K  & 66.21$\pm$1.32 & 78.01$\pm$1.72 & 77.21$\pm$1.02 & 76.21$\pm$1.99 \\
 & Llama 3.0    & 1K  & 65.32$\pm$1.02 & 77.23$\pm$2.35 & 76.97$\pm$0.71 & 77.15$\pm$1.56 \\
 & GPT-3.5           & 1K  & \textbf{68.51$\pm$2.11} & \textbf{79.52$\pm$3.16} & \textbf{79.16$\pm$0.11} & 83.55$\pm$2.64 \\
\midrule
\multirow{4}{*}{Sick}
 & Original        & -  & 87.82$\pm$2.46 & 91.22$\pm$0.93 & 82.84$\pm$1.76 & \textbf{99.61 $\pm$ 0.28} \\
 & Mistral         & 1K  & 88.04$\pm$1.36 & 94.62$\pm$1.53 & \textbf{90.23$\pm$3.35} & 99.02$\pm$0.36 \\
 & Llama 3.0    & 1K  & \textbf{88.76$\pm$1.67} & \textbf{94.67$\pm$0.50} & 90.22$\pm$1.12 & 99.13$\pm$0.28 \\
 & GPT-3.5           & 1K  & 87.99$\pm$0.91 & 93.62$\pm$0.95 & 88.04 $\pm$1.93 & 99.20$\pm$0.07 \\
\midrule
\multirow{4}{*}{Heloc}
 & Original        & -  & \textbf{71.01$\pm$0.47} & 72.21$\pm$0.37 & 67.19$\pm$0.86 & \textbf{78.52$\pm$0.38} \\
 & Mistral         & 1K  & 70.46$\pm$0.55 & 72.53$\pm$0.38 & 68.25$\pm$0.91 & 76.81$\pm$0.19 \\
 & Llama 3.0    & 1K  & 70.62$\pm$0.56 & \textbf{72.63$\pm$0.41} & \textbf{68.53$\pm$0.89} & 76.74$\pm$0.29 \\
 & GPT-3.5           & 1K  & 70.32$\pm$0.55 & 72.54$\pm$0.43 & 67.72$\pm$0.78 & 77.35$\pm$0.13 \\
\midrule
\multirow{4}{*}{Thyroid}
 & Original        & -  & 94.23$\pm$1.99 & 95.08$\pm$1.60 & 91.14$\pm$3.12 & 99.02$\pm$1.01 \\
 & Mistral         & 1K  & 93.23$\pm$3.15 & 94.75$\pm$2.29 & 90.79$\pm$4.48 & 98.71$\pm$1.47 \\
 & Llama 3.0    & 1K  & 94.06$\pm$2.22 & 95.85$\pm$1.33 & 93.42$\pm$2.34 & 98.28$\pm$1.25 \\
 & GPT-3.5           & 1K  & 
 \textbf{96.58$\pm$1.27} & \textbf{96.71$\pm$1.17} & \textbf{93.42$\pm$2.34} & \textbf{99.95$\pm$0.00} \\
\bottomrule
\end{tabular}
}
\vspace{1em}
\caption{Ablation study on the impact of LLM choice on RDDG performance.}
\label{table4}
\end{table*}

\begin{table*}[htbp]
\centering
\resizebox{0.8\textwidth}{!}{
\begin{tabular}{llcccccccc}
\toprule
\multirow{2}{*}{\textbf{Dataset}} & \multirow{2}{*}{\textbf{Method}} 
& \multicolumn{4}{c}{\textbf{Llama}} 
& \multicolumn{4}{c}{\textbf{Mistral}} \\
\cmidrule(lr){3-6} \cmidrule(lr){7-10}
& & \textbf{Macro-F1} & \textbf{BAL ACC} & \textbf{Sensitivity} & \textbf{Specificity} 
  & \textbf{Macro-F1} & \textbf{BAL ACC} & \textbf{Sensitivity} & \textbf{Specificity} \\
\midrule
\multirow{2}{*}{Travel}
 & EPIC  & 63.24$\pm$1.22 & 74.23$\pm$3.20 & 74.67$\pm$1.21 & 75.25$\pm$1.29
         & 64.35$\pm$0.98 & 75.32$\pm$2.21 & 76.78$\pm$1.22 & 75.67$\pm$1.48 \\
 & RDDG  & \textbf{65.32$\pm$1.02} & \textbf{77.23$\pm$2.35} & \textbf{76.97$\pm$0.71} & \textbf{77.15$\pm$1.56}
         & \textbf{66.21$\pm$1.32} & \textbf{78.01$\pm$1.72} & \textbf{77.21$\pm$1.02} & \textbf{76.21$\pm$1.99} \\
\midrule
\multirow{2}{*}{Sick}
 & EPIC  & 84.41$\pm$1.36 & 93.44$\pm$0.84 & 88.26$\pm$1.64 & 98.62$\pm$0.13
         & 86.43$\pm$1.05 & 94.50$\pm$0.49 & 90.22$\pm$1.12 & 98.77$\pm$0.22 \\
 & RDDG  & \textbf{88.76$\pm$1.67} & \textbf{94.67$\pm$0.50} & \textbf{90.22$\pm$1.12} & \textbf{99.13$\pm$0.28}
         & \textbf{88.04$\pm$1.36} & \textbf{94.62$\pm$1.53} & \textbf{90.22$\pm$3.35} & \textbf{99.02$\pm$0.36} \\
\midrule
\multirow{2}{*}{Heloc}
 & EPIC  & 70.18$\pm$0.30 & 72.29$\pm$0.23 & 67.91$\pm$0.57 & 76.66$\pm$0.49
         & 70.14$\pm$0.16 & 72.40$\pm$0.09 & 67.47$\pm$0.69 & \textbf{77.32$\pm$0.80} \\
 & RDDG  & \textbf{70.62$\pm$0.56} & \textbf{72.63$\pm$0.41} & \textbf{68.53$\pm$0.89} & \textbf{76.74$\pm$0.29}
         & \textbf{70.46$\pm$0.55} & \textbf{72.53$\pm$0.38} & \textbf{68.25$\pm$0.91} & 76.81$\pm$0.19 \\
\midrule
\multirow{2}{*}{Thyroid}
 & EPIC  & 88.96$\pm$2.54 & 91.24$\pm$0.88 & 84.21$\pm$0.00 & 98.28$\pm$1.77
         & \textbf{95.38$\pm$1.14} & \textbf{96.72$\pm$0.38} & \textbf{94.74$\pm$0.00} & \textbf{98.71$\pm$0.77} \\
 & RDDG  & \textbf{94.06$\pm$2.22} & \textbf{95.85$\pm$1.33} & \textbf{93.42$\pm$2.34} & \textbf{98.28$\pm$1.25}
         & 93.23$\pm$3.15 & 94.75$\pm$2.29 & 90.79$\pm$4.48 & 98.71$\pm$1.47 \\
\bottomrule
\end{tabular}}
\vspace{1em}
\caption{Comparison of EPIC and RDDG under Llama 3.0 and Mistral Max.}
\label{table:epic_rddg}
\end{table*}

\begin{table*}[htbp]
\centering
\resizebox{0.9\textwidth}{!}{
\begin{tabular}{l l c c c c c c c c c c}
\hline
\textbf{Dataset} & \textbf{Method} & \textbf{Macro-F1} & \textbf{BAL ACC} & \textbf{Sensitivity} & \textbf{Specificity} & \textbf{Token} & \textbf{Expenses} & \textbf{Prompt 1} & \textbf{Prompt 2} & \textbf{Prompt 3} & \textbf{All} \\
\hline
\multirow{2}{*}{Consumer Behavior} 
  & EPIC  & 67.32$\pm$2.24 & 77.20$\pm$4.24 & 63.88$\pm$2.10 & 87.20$\pm$3.44 & 186K & \$0.13 & - & - & - & 76.0s \\
  & RDDG  & 68.99$\pm$2.18 & 79.60$\pm$2.41 & 66.75$\pm$2.76 & 90.50$\pm$2.34 & 176K & \$0.12 & 6.0s & 2.2s & 137.3s & 169.7s \\
\hline
\multirow{2}{*}{Health Metrics} 
  & EPIC  & 93.89$\pm$2.41 & 95.10$\pm$2.89 & 91.56$\pm$1.90 & 95.56$\pm$2.21 & 475K & \$0.43 & - & - & - & 96.3s \\
  & RDDG  & 95.40$\pm$1.73 & 96.74$\pm$2.21 & 94.55$\pm$2.53 & 97.71$\pm$1.64 & 493K & \$0.34 & 6.4s & 2.7s & 47.6s & 78.7s \\
\hline
\multirow{2}{*}{Real Estate} 
  & EPIC  & 85.21$\pm$2.10 & 86.98$\pm$1.47 & 83.45$\pm$4.32 & 93.21$\pm$4.02 & 150K & \$0.10 & - & - & - & 35.2s \\
  & RDDG  & 88.70$\pm$1.72 & 88.50$\pm$1.12 & 85.43$\pm$1.47 & 95.21$\pm$2.10 & 170K & \$0.10 & 5.7s & 2.4s & 37.8s & 66.7s \\
\hline
\multirow{2}{*}{Social Network} 
  & EPIC  & 86.87$\pm$2.21 & 96.30$\pm$2.57 & 83.76$\pm$3.13 & 98.19$\pm$3.29 & 330K & \$0.15 & - & - & - & 32.7s \\
  & RDDG  & 97.12$\pm$2.13 & 98.89$\pm$2.73 & 97.66$\pm$2.17 & 99.24$\pm$3.21 & 240K & \$0.17 & 6.0s & 2.7s & 38.9s & 58.5s \\
\hline
\end{tabular}}
\caption{Comparison of EPIC and RDDG in terms of accuracy performances, number of input and output tokens, API expenses, and running time.}
\label{tab:epic_rddg_results}
\end{table*}

\noindent \textbf{LLM Cost}. While LLM cost is a valid concern, our findings in Table \ref{tab:epic_rddg_results} show that it remains manageable in practice. ``Expenses'' denotes the API cost for each dataset. For instance, generating 1,000 new samples costs under 0.5\$ using the GPT API. ``Token'' represents the total number of input and output tokens for each approach. Note that output tokens typically cost three to five times more than input tokens, as generating responses is computationally more intensive than processing prompts. It is also worth noting that both EPIC and RDDG remove the requirement for computationally expensive model training. That is, they enable users to generate synthetic samples directly through the GPT API or locally deployed open-source LLMs, eliminating the need for model training or fine-tuning. 

\noindent \textbf{Time Efficiency}. In Table \ref{tab:epic_rddg_results}, we also report the total and specific time consumption of different prompts in RDDG, in which we find that Prompt 3 consumes substantially more time since it involves batch-wise iterative data generation and a self-reinforcing feedback mechanism. Despite the increased time required, RDDG consistently outperforms EPIC, producing higher-quality synthetic data. Overall, because both methods eliminate the need for computationally expensive model training, they exhibit outstanding efficiency (under 170 seconds) compared to other non-in-context learning approaches, which typically require multiple hours or even days of model training or fine-tuning.

\subsection{Fidelity Analysis}
\label{a:dist_consistency_corr}

To evaluate the fidelity of the synthetic data generated by EPIC and RDDG methods, we conduct an analysis focusing on two critical aspects: (i) distribution consistency of the synthetic data with respect to the original data, and (ii) preservation of inter-attribute correlations. The evaluations are performed across all datasets using standardized preprocessing and normalization procedures.

\subsubsection{Evaluation Metrics for Statistical Fidelity}
\label{appendix:metrics}

We employ several metrics to assess the statistical fidelity of synthetic data along two critical dimensions: distributional consistency and preservation of inter-attribute correlations. For distribution consistency, we employ the Kullback-Leibler (KL) divergence as our primary metric, which quantifies the information loss incurred when approximating the original distribution. For inter-attribute correlation preservation, we first calculate the inter-attribute Pearson correlation coefficients in the original data and the synthetic data generated by EPIC and RDDG, respectively, and then derive the correlation difference matrix (taking absolute value) between the correlation coefficient matrices of the original data and the synthetic data generated by EPIC and RDDG, respectively. We then use four metrics: Frobenius norm, Mean Absolute Error (MAE), Root Mean Square Error (RMSE), and Max Difference to measure the overall preservation of inter-attribute correlations in the correlation difference matrix. Below, we provide formal definitions of each metric.

\paragraph{Kullback-Leibler (KL) Divergence}
The KL divergence quantifies the information loss when approximating the original data distribution $P$ with the synthetic data distribution $Q$. For continuous attributes, we discretize the data into $k$ bins and compute:

\begin{equation}
D_{KL}(P \| Q) = \sum_{i=1}^{k} P(i) \log \frac{P(i)}{Q(i)}
\end{equation}

\noindent where $P(i)$ and $Q(i)$ represent the probability mass in bin $i$ for the original and synthetic distributions, respectively. In our experiments, we use $k=50$ bins with equal-width binning after standardization. The overall KL divergence for a dataset is computed as the mean across all numeric attributes:

\begin{equation}
\overline{D}_{KL} = \frac{1}{n} \sum_{j=1}^{n} D_{KL}(P_j \| Q_j)
\end{equation}

\noindent where $n$ is the number of numeric attributes. Lower values indicate better preservation of the distribution, with $D_{KL} = 0$ indicating a perfect match.

\paragraph{Frobenius Norm}
Let $\mathbf{C}_{real} \in \mathbb{R}^{n \times n}$ and $\mathbf{C}_{syn} \in \mathbb{R}^{n \times n}$ denote the Pearson correlation matrices for the original and synthetic data, respectively, where each element $C_{ij}$ represents the correlation coefficient between attributes $i$ and $j$.

The Frobenius norm measures the overall magnitude of differences between correlation matrices (i.e., the correlation difference matrix):

\begin{equation}
\|\mathbf{C}_{real} - \mathbf{C}_{syn}\|_F = \sqrt{\sum_{i=1}^{n} \sum_{j=1}^{n} |C_{real,ij} - C_{syn,ij}|^2}
\end{equation}

\noindent This metric provides a single scalar value that quantifies the total deviation in the correlation structure.

\paragraph{Mean Absolute Error (MAE)}
The MAE computes the average absolute difference across all pairwise correlations:

\begin{equation}
\text{MAE} = \frac{1}{n^2} \sum_{i=1}^{n} \sum_{j=1}^{n} |C_{real,ij} - C_{syn,ij}|
\end{equation}

\noindent This metric is more interpretable than the Frobenius norm, as it represents the average deviation of the correlation coefficients.

\paragraph{Root Mean Square Error (RMSE)}
The RMSE penalizes larger deviations more heavily than MAE:

\begin{equation}
\text{RMSE} = \sqrt{\frac{1}{n^2} \sum_{i=1}^{n} \sum_{j=1}^{n} (C_{real,ij} - C_{syn,ij})^2}
\end{equation}

\paragraph{Maximum Absolute Difference (Max Diff)}
The maximum absolute difference identifies the worst-case correlation preservation:

\begin{equation}
\text{Max Diff} = \max_{i,j} |C_{real,ij} - C_{syn,ij}|
\end{equation}

\noindent This metric helps identify worst-case scenarios for specific attribute pairs where correlation preservation fails substantially.

For all metrics described above, lower values indicate better preservation of statistical properties. Specifically, in our evaluations:
\begin{itemize}
    \item \textbf{KL Divergence}: Values near 0 indicate excellent distribution matching; values above 1.0 suggest substantial distributional differences.
    \item \textbf{Correlation Metrics}: For datasets with $n$ attributes, perfect correlation preservation yields 0 for all metrics. As a reference, random synthetic data typically produces Frobenius norm values of $O(\sqrt{n})$.
    \item \textbf{Comparative Analysis}: We report both absolute metrics and relative improvements (percentage reduction) compared to baseline methods to contextualize performance gains.
\end{itemize}

\subsubsection{Distribution Consistency}

Within each dataset, we compute the mean KL divergence over numeric attributes and compute macro-averages across datasets using equal weighting. Figure~\ref{fig:kl_per_dataset} presents the comparative KL divergence analysis across all eight datasets. We observe that RDDG outperforms EPIC on 6 of 8 datasets. Notable improvements are observed on the Travel dataset (16.3 vs. 20.4, a 20.1\% reduction), the Thyroid dataset (KL divergence: 0.278 for RDDG vs. 2.83 for EPIC, a 90.2\% reduction), and the Consumer Behavior dataset (2.37 vs. 3.5, a 32.3\% reduction). The Heloc and Social Network datasets are exceptions where EPIC demonstrates better performance. Overall, RDDG demonstrates superior distributional consistency with the original data, with a macro-average KL divergence of 3.72 compared to EPIC's 4.55, representing an 18.2\% improvement.

\begin{figure}[htb!]
    \centering
    \includegraphics[width=\textwidth]{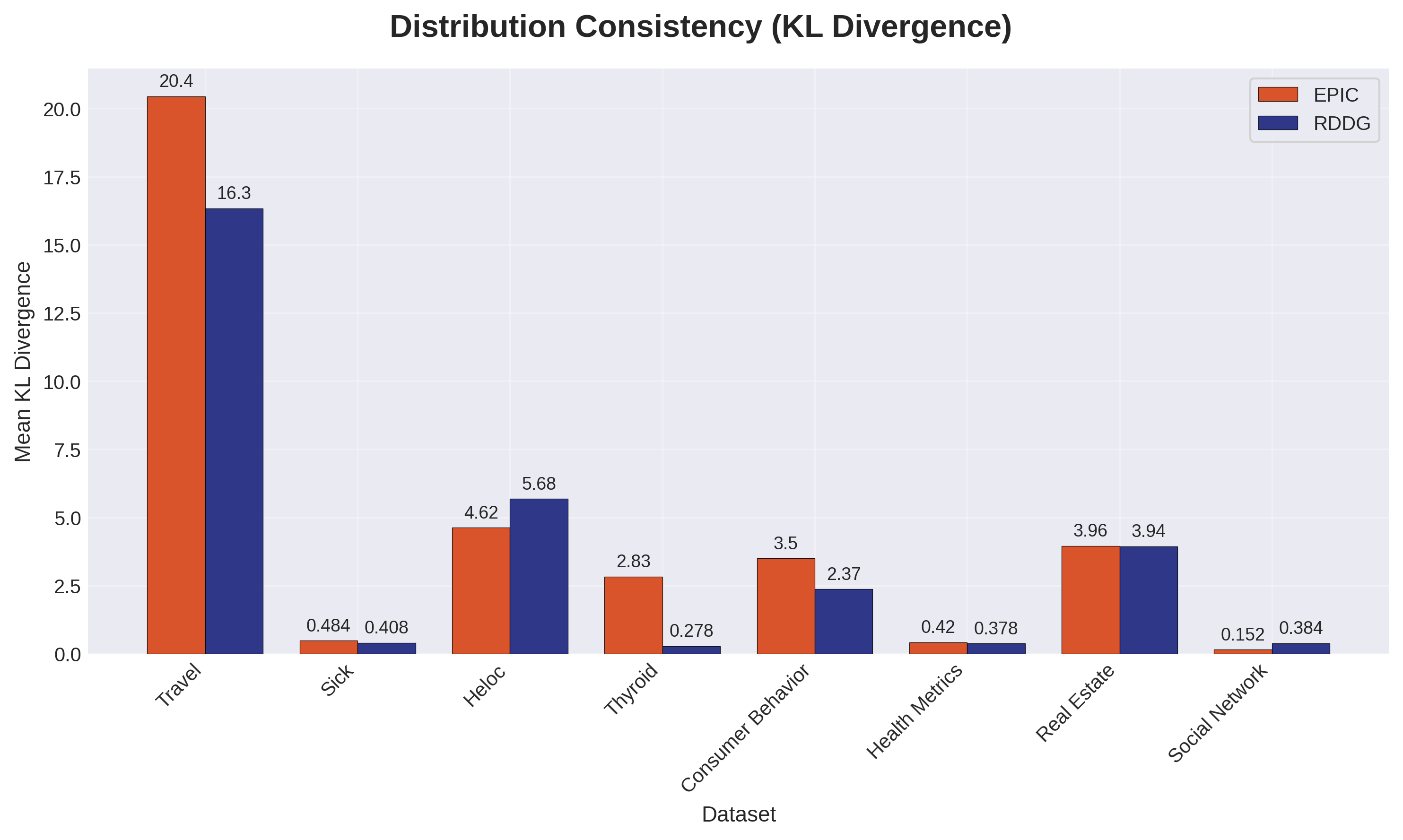}
    \caption{Mean KL divergence per dataset comparing EPIC and RDDG methods. Lower values indicate better distribution preservation.}
    \label{fig:kl_per_dataset}
\end{figure}

\begin{figure}[htb!]
    \centering
    \begin{subfigure}[b]{0.32\textwidth}
        \includegraphics[width=\textwidth]{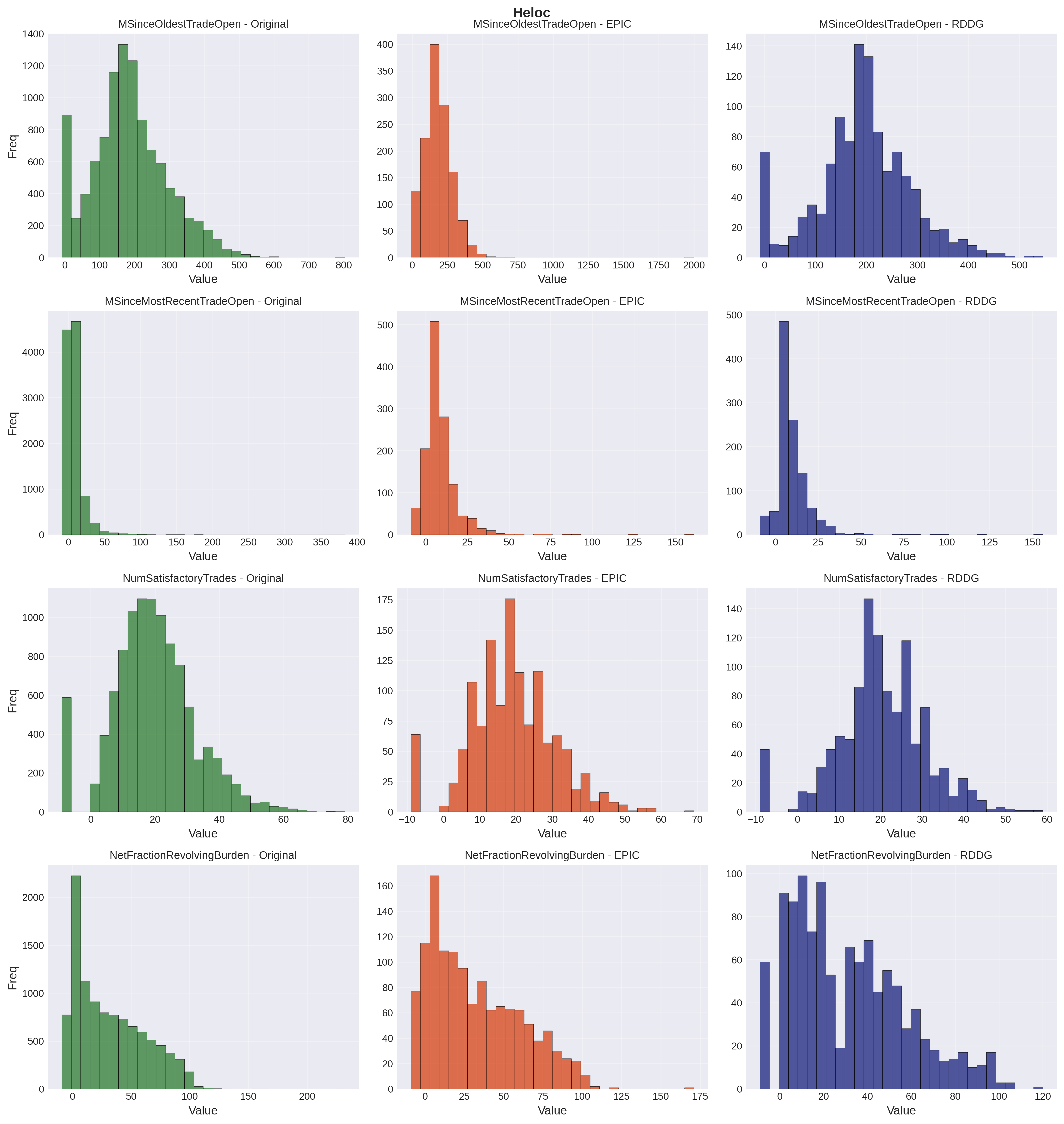}
        \caption{Heloc}
        \label{fig:dist_heloc}
    \end{subfigure}
    \begin{subfigure}[b]{0.32\textwidth}
        \includegraphics[width=\textwidth]{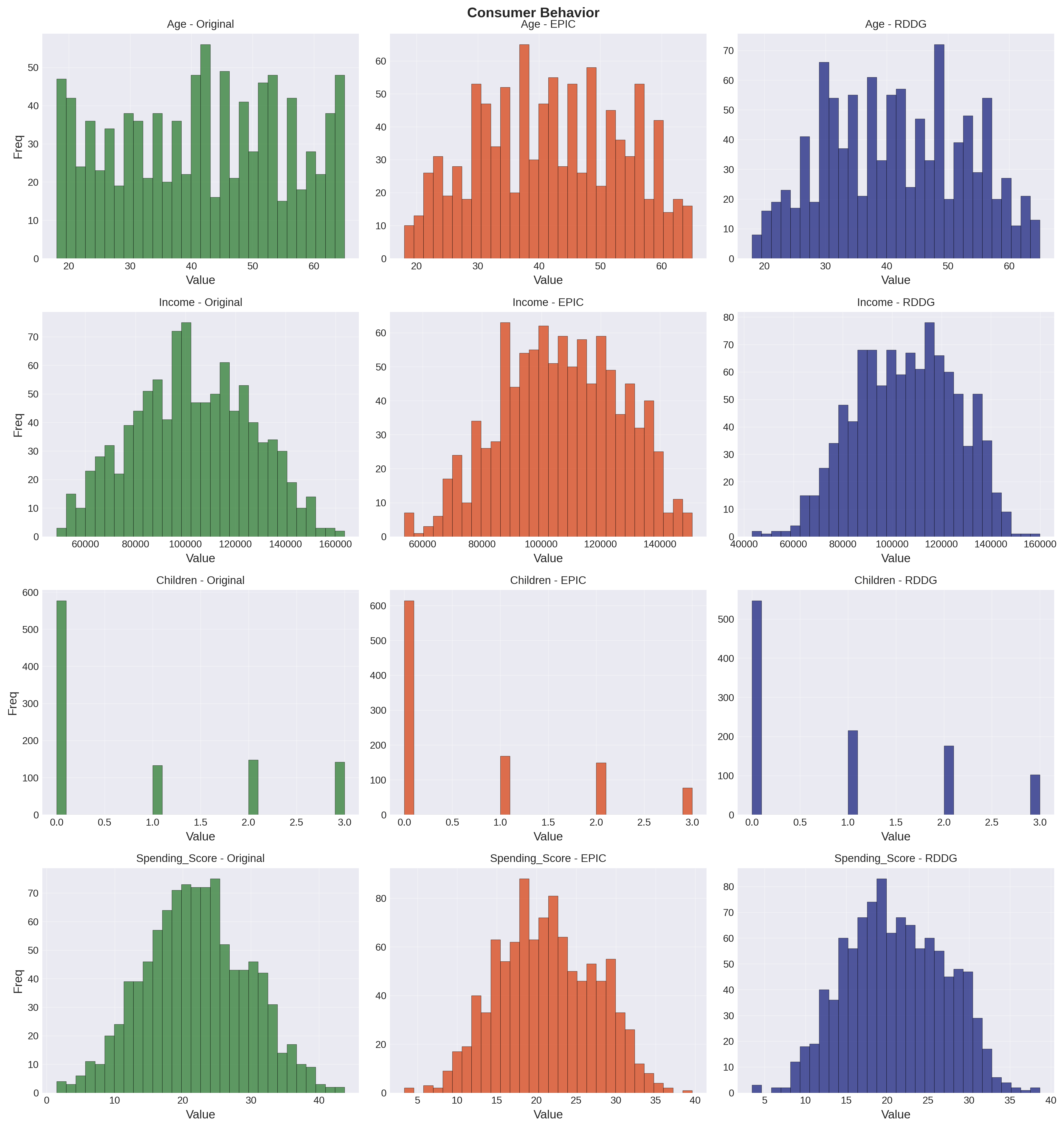}
        \caption{Consumer Behavior}
        \label{fig:dist_consumer}
    \end{subfigure}
    \begin{subfigure}[b]{0.32\textwidth}
        \includegraphics[width=\textwidth]{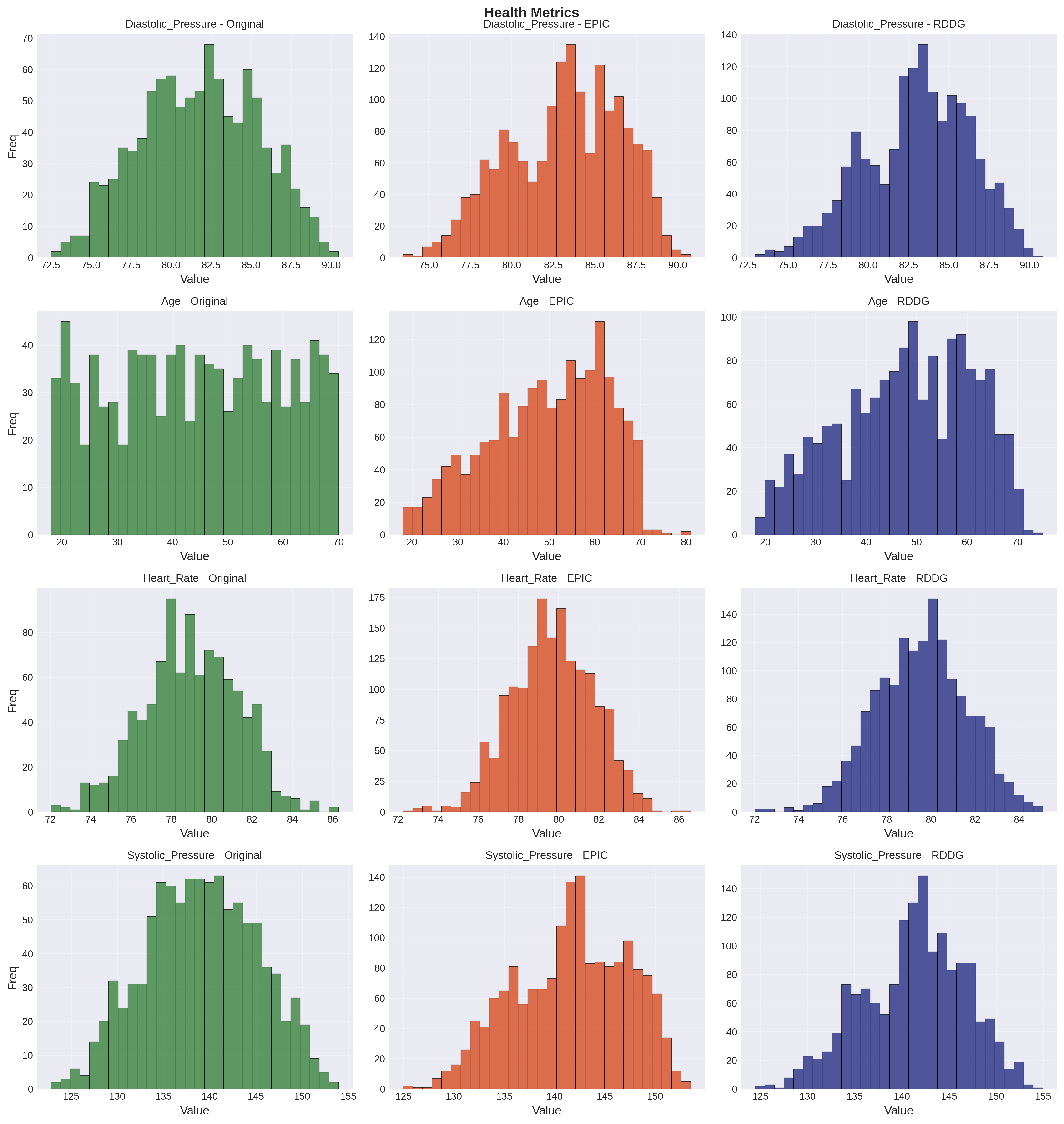}
        \caption{Health Metrics}
        \label{fig:dist_health}
    \end{subfigure}
    \caption{Distribution comparisons between original data, and synthetic data generated by both EPIC and RDDG, respectively, over selected features/attributes across three representative datasets.}
    \label{fig:distributions_comparison}
\end{figure}

To provide detailed insights into distribution preservation, we visualize the distributions on representative datasets in Figures~\ref{fig:dist_heloc}-- \ref{fig:dist_health}. We observe that the synthetic data generated by both EPIC and RDDG generally follows the distribution of the attributes in the original datasets. 

\begin{figure}[htb!]
    \centering
    \includegraphics[width=\textwidth]{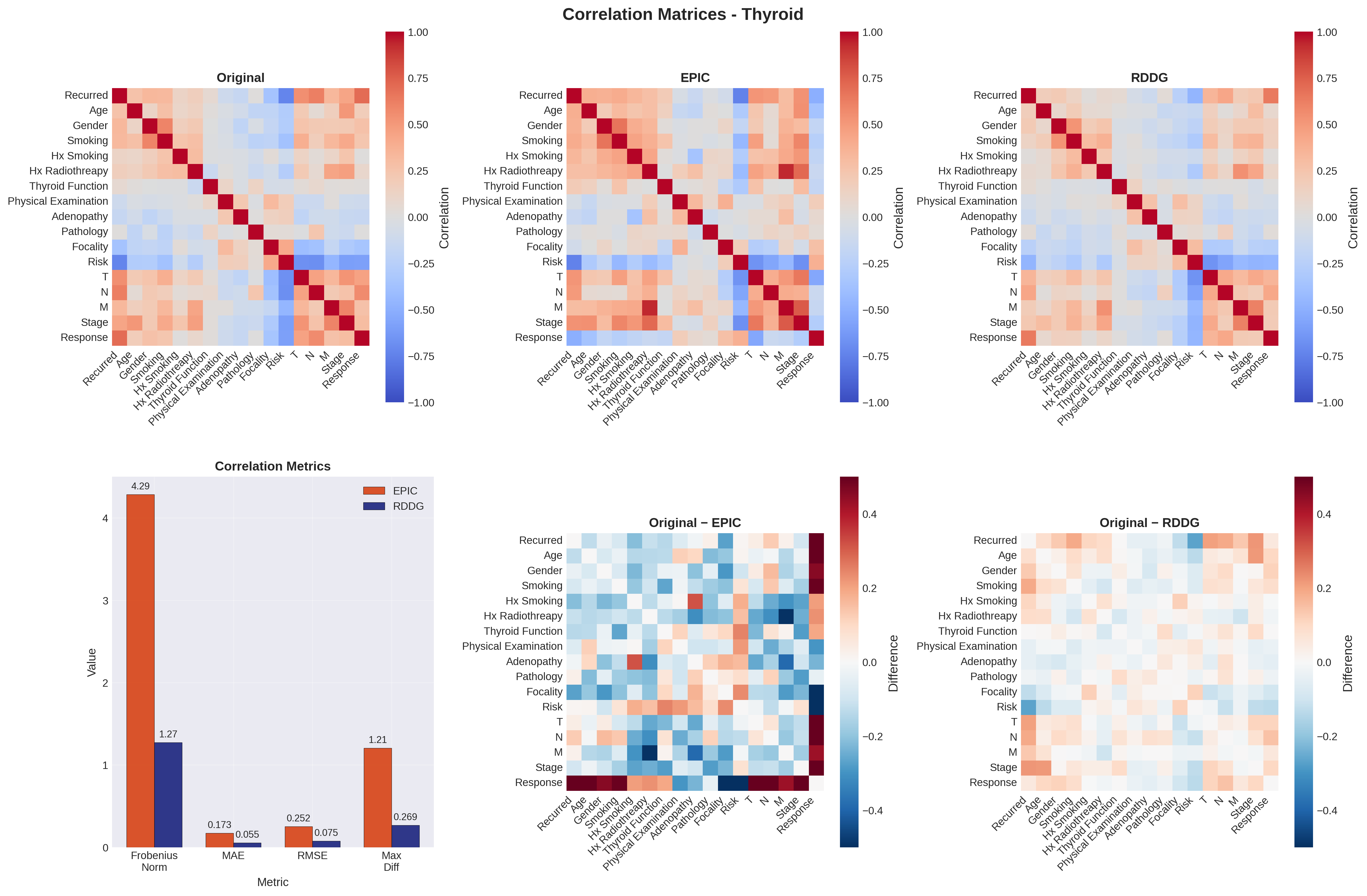}
    \caption{Correlation matrix analysis for the Thyroid dataset showing original correlations, synthetic data correlations (EPIC and RDDG), difference heatmaps, and preservation metrics comparison.}
    \label{fig:corr_thyroid}
\end{figure}

\begin{figure}[htb!]
    \centering
    \includegraphics[width=\textwidth]{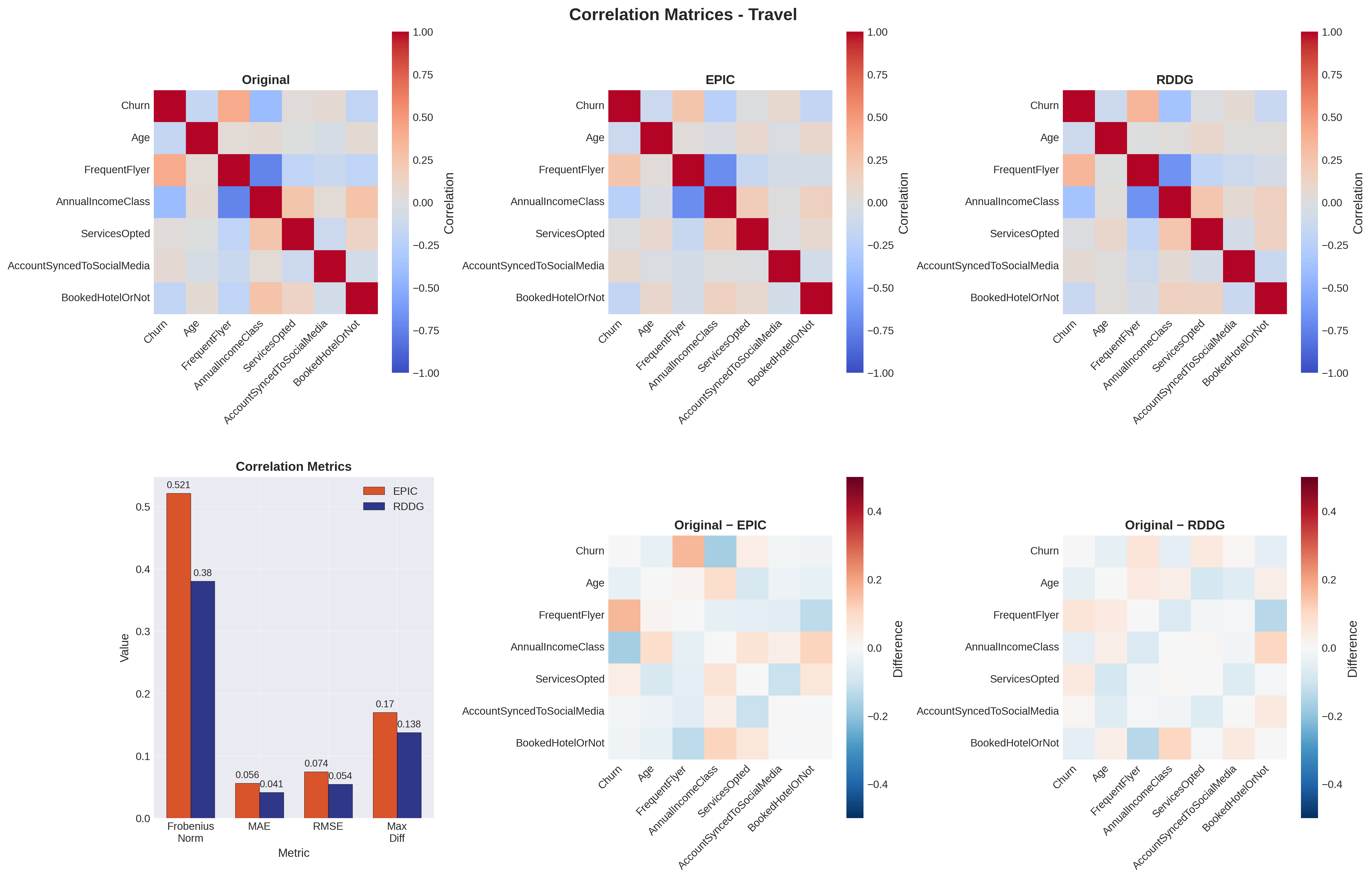}
    \caption{Correlation matrix analysis for the Travel dataset demonstrating superior correlation preservation by RDDG across all evaluation metrics.}
    \label{fig:corr_travel}
\end{figure}

\subsubsection{Inter-Attribute Correlation Preservation}

Preserving correlation structures is crucial for maintaining data fidelity in synthetic data. Figure~\ref{fig:corr_thyroid} presents the correlation preservation analysis results on Thyroid. The first three subfigures display the inter-attribute Pearson correlation coefficients for the original data and the two synthetic datasets from EPIC and RDDG, respectively. The last two subfigures show the correlation difference matrix between the original data and each of the two synthetic datasets, with lighter shades indicating better preservation of inter-attribute correlations. We observe that RDDG exhibits substantial superiority in preserving inter-attribute correlations. Finally, the fourth subfigure calculates the overall inter-attribute correlation preservation degree using Frobenius norm, MAE, RMSE, and Max Diff, showing remarkable improvements over RDDG: Frobenius norm of 1.27 versus EPIC's 4.29 (70.4\% improvement), MAE of 0.065 versus 0.173 (62.4\% improvement), RMSE of 0.075 versus 0.252 (70.2\% improvement), and Max Diff of 0.269 versus 1.21 (77.8\% improvement).

Similar observations also hold for the Travel and Real Estate datasets (Figures \ref{fig:corr_travel} and \ref{fig:corr_realestate}), where RDDG consistently outperforms EPIC across all correlation preservation metrics, maintaining inter-attribute structural relationships with substantially higher fidelity. These results indicate that RDDG preserves both distributional properties and inter-attribute relationships more effectively than EPIC. 

\subsection{Detailed Ablation Study Results} \label{Appendix-ablation}

To investigate the impact of core set selection, we compare classification performance between core sets and randomly sampled subsets on the Travel and Thyroid datasets. Note that the reference set size is set to 20 for the Thyroid dataset in this set of ablation studies. As shown in Table~\ref{tab:travel_thyroid_results}, the core set algorithm achieves substantial performance gains over random sampling. Furthermore, we demonstrate the effectiveness of our dynamic guidance (self-reinforcing feedback) mechanism, which significantly enhances overall performance, as evidenced in Table~\ref{tab:travel_thyroid_results}.

To examine how the imbalance ratio (IR) affects the performance of EPIC and RDDG, we use the UCI Segmentation dataset\footnote{https://archive.ics.uci.edu/dataset/50/image+segmentation.} and artificially vary the ratio between majority- and minority-class samples using an exponentially decaying strategy. As reported in Table~\ref{ir_result}, RDDG consistently outperforms EPIC across all metrics under varying IR levels, and its advantage over EPIC becomes more obvious as IR increases.

\begin{table*}[htbp]
\centering

\resizebox{0.65\textwidth}{!}{
\begin{tabular}{l l c c c c}
\hline
\textbf{Dataset} & \textbf{Method} & \textbf{Macro-F1} & \textbf{BAL ACC} & \textbf{Sensitivity} & \textbf{Specificity} \\
\hline
\multirow{3}{*}{Travel} 
  & RDDG        & 68.51$\pm$2.11 & 79.52$\pm$5.16 & 79.16$\pm$0.11 & 83.55$\pm$2.64 \\
  &  RDDG w/o Feedback Mechanism & 66.91$\pm$2.10 & 77.21$\pm$1.21 & 78.12$\pm$2.35 & 82.23$\pm$1.34 \\
  & RDDG w/o CoreSet & 67.55$\pm$1.32 & 78.13$\pm$2.47 & 79.00$\pm$1.32 & 82.96$\pm$2.76 \\
\addlinespace
\hline
\addlinespace
\multirow{3}{*}{Thyroid} 
  & RDDG        & 97.30$\pm$1.31 & 97.37$\pm$0.44 & 94.74$\pm$0.00 & 100.00$\pm$0.88 \\
  & RDDG w/o Feedback Mechanism & 94.74$\pm$0.00 & 96.51$\pm$0.00 & 94.74$\pm$0.00 & 98.28$\pm$0.00 \\
  & RDDG w/o CoreSet & 96.02$\pm$1.31 & 96.94$\pm$0.44 & 94.74$\pm$0.00 & 99.14$\pm$0.88 \\

\hline
\end{tabular}}
\caption{Ablation study on Travel and Thyroid datasets.}
\label{tab:travel_thyroid_results}
\end{table*}

\begin{table*}[htbp]
\centering
\resizebox{0.95\textwidth}{!}{
\begin{tabular}{ccccccccccccc}
\hline
\textbf{Dataset} & \textbf{IR} & \textbf{Method} & \textbf{Macro-F1} & \textbf{BAL ACC} & \textbf{Sensitivity} & \textbf{Specificity} & \textbf{Token} & \textbf{Expenses} & \textbf{Prompt 1} & \textbf{Prompt 2} & \textbf{Prompt 3} & \textbf{All} \\ \hline

\multirow{8}{*}{Segmentation} 
& \multirow{2}{*}{2} & EPIC  & 95.65$\pm$1.55 & 95.64$\pm$1.56 & 95.64$\pm$1.56 & 98.61$\pm$0.26 & 360K & \$0.27 & - & - & - & 312.4s \\ 
& & RDDG & 96.32$\pm$1.95 & 96.29$\pm$1.96 & 96.29$\pm$1.96 & 99.38$\pm$0.33 & 317K & \$0.25 & 7.4s & 2.3s & 231.5s & 279.6s \\ \cline{2-13}

& \multirow{2}{*}{5} & EPIC  & 95.80$\pm$1.41 & 95.76$\pm$1.43 & 95.76$\pm$1.43 & 99.29$\pm$0.24 & 400K & \$0.32 & - & - & - & 321.2s \\ 
& & RDDG & 96.01$\pm$1.29 & 95.98$\pm$1.31 & 95.98$\pm$1.31 & 99.33$\pm$0.22 & 320K & \$0.26 & 4.8s & 4.8s & 263.9s & 295.8s \\ \cline{2-13}

& \multirow{2}{*}{10} & EPIC  & 94.43$\pm$1.29 & 94.40$\pm$1.29 & 94.40$\pm$1.29 & 99.07$\pm$0.22 & 330K & \$0.26 & - & - & - & 269.3s \\ 
& & RDDG & 95.62$\pm$0.90 & 95.55$\pm$0.93 & 95.55$\pm$0.93 & 99.26$\pm$0.16 & 360K & \$0.29 & 4.9s & 2.7s & 341.0s & 358.4s \\ \cline{2-13}

& \multirow{2}{*}{20} & EPIC  & 90.53$\pm$1.93 & 91.57$\pm$1.88 & 91.57$\pm$1.88 & 97.93$\pm$0.31 & 370K & \$0.20 & - & - & - & 284.7s \\ 
& & RDDG & 91.98$\pm$1.20 & 92.02$\pm$1.20 & 92.02$\pm$1.20 & 98.67$\pm$0.20 & 1343K & \$0.81 & 4.9s & 2.1s & 719.9s & 746.8s \\ \hline

\end{tabular}}
\caption{Ablation study results on the Segmentation dataset under different imbalance ratios (IR).} \label{ir_result}
\end{table*}

\begin{table}[htbp]
\centering
\resizebox{0.85\textwidth}{!}{
\begin{tabular}{c|c|cccc|cccccc}  
\hline
\textbf{Dataset} & \textbf{Ref. size} & \textbf{Macro-F1} & \textbf{BAL ACC} & \textbf{Sensitivity} & \textbf{Specificity} & \textbf{Token} & \textbf{Expenses} & \textbf{Prompt 1} & \textbf{Prompt 2} & \textbf{Prompt 3} & \textbf{All} \\
\hline
\multirow{4}{*}{Travel} 
& 5  & 64.23$\pm$2.97 & 75.23$\pm$5.89 & 72.23$\pm$1.98 & \textbf{84.65$\pm$2.34} & 110K & \$0.09 & 5.1s & 2.2s & 43.0s & 61.0s \\
& 10 & \underline{68.51$\pm$2.11} & \underline{79.52$\pm$5.16} & \textbf{79.16$\pm$0.11} & \underline{83.55$\pm$2.64} & 160K & \$0.10 & 5.7s & 2.3s & 30.4s & 51.2s \\
& 20 & 66.23$\pm$1.23 & 79.23$\pm$5.23 & \underline{78.99$\pm$1.21} & 83.25$\pm$1.43 & 130K & \$0.11 & 6.9s & 3.1s & 14.2s & 34.1s \\
& 30 & \textbf{68.63$\pm$2.12} & \textbf{79.67$\pm$4.68} & 78.23$\pm$1.23 & 82.67$\pm$2.56 & 190K & \$0.13 & 6.1s & 2.1s & 23.4s & 41.7s \\

\hline

\multirow{4}{*}{Thyroid} 
& 5  & 95.38$\pm$1.14 & 96.72$\pm$0.38 & 94.72$\pm$0.00 & 98.71$\pm$0.77 & 358K & \$0.24 & 5.9s & 2.5s & 150.5s & 172.7s \\
& 10 & 95.41$\pm$2.13 & \underline{96.73$\pm$0.73} & \underline{94.73$\pm$0.00} & 98.71$\pm$1.47 & 353K & \$0.24 & 5.9s & 2.3s & 107.2s & 127.3s \\
& 20 & \textbf{97.30$\pm$1.31} & \textbf{97.30$\pm$0.44} & \textbf{94.74$\pm$0.00} & \textbf{100.00$\pm$0.88} & 352K & \$0.24 & 8.4s & 2.3s & 99.9s & 126.2s \\
& 30 & \underline{96.58$\pm$1.27} & 96.71$\pm$1.17 & 93.42$\pm$2.34 & \underline{99.95$\pm$0.00} & 440K & \$0.29 & 6.2s & 2.6s & 82.1s & 100.9s \\
\hline
\end{tabular}}
\caption{Ablation study on the influence of reference set size on the overall performance.} \label{bs_influ}
\end{table}

In Table \ref{bs_influ}, we also study the influence of the reference set size (\textit{Ref. size}) on the overall performance of RDDG. As with EPIC, the default size is fixed to 30 (15 samples per class) for all four real datasets. We show that it can be manually tuned for different datasets to further improve the overall performance.

\subsection{Datasets, Evaluation Metrics and Implementation Details} \label{Appendix-data}

\noindent \textbf{Datasets}. Tables \ref{dataset-des} and \ref{dataset-meta} present the statistics and descriptions of the datasets and their attributes. For the four synthetic datasets, to study the preservation of inter-attribute correlations, we manually define correlations among some attributes before generating the corresponding values. For instance, in the \textit{Real Estate} dataset, the \textit{price} of an apartment is defined to be the multiplication of \textit{basic price per square meter}, \textit{apartment size}, and \textit{renovation level}, minus age discount. In our code repository, we also provide code to generate these four datasets. 

\begin{table*}[htbp]
\centering
\resizebox{\textwidth}{!}{
\begin{tabular}{lccccc}
\toprule
\textbf{Dataset}  & \textbf{Num. attributes} & \textbf{Num. samples} & \textbf{Num. classes} & \textbf{Class samples} & \textbf{IR}  \\
\midrule
Travel           & 6  & 894   & 2 & \{0: 678, 1: 216\} & 3.139 \\
Sick              & 27 & 3711  & 2 & \{'negative': 3480, 'sick': 231\} & 15.065\\
Heloc             & 23 & 10459 & 2 & \{'Bad': 4364, 'Good': 4003\} & 1.090 \\
Thyroid           & 16 & 383   & 2 & \{'No': 275, 'Yes': 108\} & 2.546 \\
\addlinespace
\hline
\addlinespace
Consumer Behavior  & 9  & 1000  & 2 & \{Home: 518, Food: 482\} & 1.075 \\
Health Metrics     & 9  & 1000  & 3 & \{low risk: 500, medium risk: 300, high risk: 200\} & 2.500 \\
Real Estate        & 8  & 1000  & 2 & \{no: 788, yes: 212\} & 3.717 \\
Social Network     & 9  & 1000  & 4 & \{0: 789, 1: 86, 3: 70, 2: 55\} & 14.345 \\
\bottomrule
\end{tabular}}
\caption{Overview of representative dataset statistics.} \label{dataset-des}
\end{table*}

\begin{table*}[htb!]
\centering
\resizebox{\textwidth}{!}{
\small{
\begin{tabular}{p{0.08\textwidth}p{0.92\textwidth}} 
\toprule
\textbf{Dataset}  & \textbf{Attributes} \\
\midrule
Sick & The Sick dataset contains patient features such as age, sex, and thyroid-related test indicators, along with corresponding diagnosis results (sick or negative). \\
\addlinespace
Travel & The Travel dataset contains customer features such as age, frequent flyer status, annual income class, services opted, account synchronization to social media, and hotel booking status, along with corresponding churn labels.\\
\addlinespace
Thyroid & The Thyroid dataset contains patient features such as age, gender, smoking history, thyroid function, physical examination findings, pathology, and tumor staging information, along with corresponding recurrence outcomes. \\
\addlinespace
Heloc & The Heloc dataset contains credit risk features such as trade information, external risk estimates, and payment history, along with corresponding risk performance labels (Bad or Good). \\
\addlinespace
\hline
\addlinespace
Consumer Behavior & The Consumer Behavior dataset contains customer demographic information such as age, gender, income, spending score, education level, marital status, children, and location, along with corresponding product categories (food or home). \\
\addlinespace
Health Metrics & The Health Metrics dataset contains patient health indicators such as age, gender, height, weight, heart rate, blood pressure, and cholesterol levels, along with corresponding risk levels (low, medium, or high). \\
\addlinespace
Real Estate & The Real Estate dataset contains property features such as area, location, age, renovation level, price, house type, and traffic convenience, along with corresponding school district indicators (yes or no). \\
\addlinespace
Social Network & The Social Network dataset contains user social media features such as age, country, daily posts, following counts, followers counts, average likes, average likes from following, and account age exponent, along with corresponding influence levels (ranging from 0 to 3). \\
\bottomrule
\end{tabular}}
}
\caption{Attributes in different datasets.} \label{dataset-meta}
\end{table*}

\begin{table*}[htb!]
\centering
\resizebox{\textwidth}{!}{
\begin{tabular}{lcccccccc}
\toprule
\textbf{Dataset} & \textbf{Input Dim} & \textbf{Output Dim} & \textbf{MLP Architecture} & \textbf{Batch Size} & \textbf{LR} & \textbf{Optimizer} & \textbf{Epochs} \\
\midrule
Travel  & 6  & 2 & Input $\to$ Attn $\to$ Block(64) $\to$ Block(32) $\to$ Block(16) $\to$ Linear(8) $\to$ Linear(2) & 64 & 0.001 & Adam ($\beta_1=0.5,\ \beta_2=0.9$) & 100 \\
Heloc  & 22  & 2 & Input $\to$ Attn $\to$ Block(64) $\to$ Block(32) $\to$ Block(16) $\to$ Linear(8) $\to$ Linear(2) & 64 & 0.001 & Adam & 100 \\
Sick      & 27  & 2  & Input $\to$ Attn $\to$ Block(64) $\to$ Block(32) $\to$ Block(16) $\to$ Linear(8) $\to$ Linear(2) & 64 & 0.001 & Adam & 100 \\
Thyroid  & 16  & 2 & Input $\to$ Attn $\to$ Block(64) $\to$ Block(32) $\to$ Block(16) $\to$ Linear(8) $\to$ Linear(2) & 64 & 0.001 & Adam & 100 \\
\addlinespace
\hline
\addlinespace
Consumer Behavior  & 8  & 2 & Input $\to$ Attn $\to$ Block(64) $\to$ Block(32) $\to$ Block(16) $\to$ Linear(8) $\to$ Linear(2) & 64 & 0.001 & Adam & 100 \\
Health Metrics  & 8  & 3 & Input $\to$ Attn $\to$ Block(64) $\to$ Block(32) $\to$ Block(16) $\to$ Linear(8) $\to$ Linear(3) & 64 & 0.001 & Adam & 100 \\
Real Estate  & 7  & 2 & Input $\to$ Attn $\to$ Block(64) $\to$ Block(32) $\to$ Block(16) $\to$ Linear(8) $\to$ Linear(2) & 64 & 0.001 & Adam & 100 \\
Social Network  & 8  & 4 & Input $\to$ Attn $\to$ Block(64) $\to$ Block(32) $\to$ Block(16) $\to$ Linear(8) $\to$ Linear(4) & 64 & 0.001 & Adam & 100 \\
\bottomrule
\end{tabular}}
\caption{Implementation details of the Core Set algorithm.} \label{coresetconfig}
\end{table*}

\noindent \textbf{Detailed Explanations on the Evaluation Metrics}. In this work, to be consistent with EPIC, we also adopt the weighted Macro-F1 Score, Balanced Accuracy (BAL ACC), Sensitivity, and Specificity as the main evaluation metrics, which measure the overall imbalanced classification performance, the average of per-class recalls across all classes, and the corresponding recalls for the minority and majority classes, respectively. 

In classification metrics, sensitivity is essentially the recall of the positive (minority) class, measuring the model's accuracy on samples from the minority class. Conversely, specificity is the recall for the negative (majority) class, measuring the model's accuracy on majority-class samples. Balanced accuracy is defined as the average of per-class recall across all classes (majority and minority).
 
The F1 score is the harmonic mean of precision and recall. In multi-class scenarios, the weighted Macro-F1 Score is computed by: (1) calculating per-class F1 scores using a one-vs-rest approach (treating each class as positive and all others as negative), then (2) taking the weighted average of these F1 scores, where weights are determined by each class's support (number of samples).

\textbf{Implementation of Core Set}. In Table \ref{coresetconfig}, we give the implementation details and training configurations of the Core Set algorithm. We set K to be \textit{100} for core set selection when choosing the Top-K most representative samples for each class.

\subsection{Prompt Design} \label{Appendix-prompt}

\begin{table*}[htb!]
\centering
\resizebox{\linewidth}{!}{
\small{
\begin{tabular}{@{}p{0.1\textwidth}p{0.1\textwidth}p{0.2\textwidth}p{0.3\textwidth}p{0.3\textwidth}@{}}
\toprule
\textbf{Stage} & \textbf{Prompt Name} & \textbf{Main Purpose} & \textbf{Input Information} & \textbf{Output Information} \\
\midrule
\textbf{Initialization} & 
MetaData Description & 
Inject domain knowledge and establish category label semantics & 
\textit{Domain definitions:} Class labels (hypothyroidism), patient demographics (age, sex), laboratory values (TSH, T3, TT4, T4U, FTI) & 
Contextual grounding for LLMs (no direct output) \\
\addlinespace
\textbf{Prompt 1} & 
Relationship Analysis & 
Guide model to explore variable interactions and correlations & 
\textit{Analysis directive:} core set and the MetaData description as domain knowledge & 
Statistical/semantic associations (e.g., ``High TSH correlates with hypothyroidism'') \\
\addlinespace
\textbf{Prompt 2} & 
Constraint Derivation & 
Extract rules and constraints from prior analysis & 
\textit{Prior analysis results} + directive to define generation rules and constraints & 
Qualitative and quantitative rules for structured generation \\
\addlinespace
\textbf{Prompt 3} & 
Data Generation & 
Generate structured synthetic data using derived constraints & 
\textit{Derived constraints} + class balance requirements & 
Structured synthetic samples with balanced class distribution \\
\bottomrule
\end{tabular}
}
}
\caption{Overview of the three-stage prompt design for synthetic data generation. Each stage progressively builds upon the previous one to establish domain knowledge, analyze relationships, and generate structured synthetic data.}
\label{tab:prompt_design}
\end{table*}

\begin{table*}[htbp]
\centering
\small{
\begin{tabular}{@{}p{0.25\textwidth} p{0.75\textwidth}@{}}
\toprule
\textbf{Prompt} & \textbf{Output} \\
\midrule
Prompt 1 & 
Please analyze the relationships between these features and the churn (Churn) class. Identify any significant correlations or patterns that could help predict customer churn. Identify potential interactions among these features that may provide insights into customer behavior and churn likelihood.  \\
\midrule
Output1(analysis\_results) & 
From the given data, it appears that there are some potential relationships between the features and customer churn. 1. Age: In some cases, younger customers are more likely to churn compared to older customers... \\
\midrule

Prompt 2 &
\{analysis\_results\}.
Based on the above background data and the relationships among the data, rules and constraints for data generation are established. \\
\midrule
Output2(constraints) &
Rule 1: Customers who are in their 20s and opt for more services are more likely to churn compared to older customers who opt for fewer services. Rule 2: Frequent flyers are less likely to churn compared to customers who do not frequently fly. Rule 3:... \\

\midrule
Prompt 3\_1 &
\{constraints\}, ensure the class generation is balanced. \\
\midrule
Prompt 3\_2 & 
You are generating tabular data. Here is the quality evaluation report: 1. Mean and Standard Deviation Differences:   - Age: Mean diff = 0.03, Std dev diff = 0.30.... \\
\midrule
output of Prompt 3(Prompt 3\_1 + Prompt 3\_2) & 
Churn Age FrequentFlyer AnnualIncomeClass ServicesOpted AccountSyncedToSocialMedia BookedHotelOrNot
\\
 & A. \\
 & IHU,30.0,YMP,CL2,4.0,NXU,U0X
 \\
 &
IHU,31.0,YBW,OI8,2.0,NXU,EUA \\
 &
B. \\
 & 
HRL,29.0,YMP,T6L,2.0,NXU,EUA \\
 & 
HRL,31.0,YBW,CL2,4.0,NXU,U0X
\\

\bottomrule
\end{tabular}
}
\caption{Prompt Examples on the Travel Dataset.} \label{fullprompt1}
\end{table*}

\begin{table*}[htbp]
\centering
\small{
\begin{tabular}{@{}p{0.25\textwidth} p{0.75\textwidth}@{}}
\toprule
\textbf{Prompt} & \textbf{Output} \\
\midrule

Prompt 1 & 
Please analyze the relationships between these features and the recurrence of thyroid cancer (Recurred), identifying any significant correlations or patterns that could help predict cancer recurrence and potential interactions among features.  \\
\midrule
Output1(analysis\_results) & 
From the given data, we can identify potential patterns and correlations that could help predict cancer recurrence: 1. Age: Older age may be correlated with a higher likelihood of cancer recurrence. ... \\
\midrule

Prompt 2 &
\{analysis\_results\}.
Based on the above background data and the relationships among the data, rules and constraints for data generation are established. \\
\midrule
Output2(constraints) &

Rules and constraints for data generation based on the relationships between the features and the recurrence of thyroid cancer could include: 1. Age must be recorded accurately and consistently, as older age may be correlated with a higher likelihood of cancer recurrence... \\

\midrule
Prompt 3\_1 &
\{constraints\}, Ensure the class generation is balanced. \\
\midrule

Prompt 3\_2 & 
You are generating tabular data. Here is the quality evaluation report: 1. Mean and Standard Deviation Differences:   - Age: Mean diff = 1.09, Std dev diff = 2.80.... \\
\midrule

Output of Prompt 3(Prompt 3\_1 + Prompt 3\_2) & 
Recurred, Age, Gender, Smoking, Hx Smoking, Hx Radiotherapy, Thyroid Function, Physical Examination, Adenopathy, Pathology, Focality, Risk, T, N, M, Stage, Response
\\
& A. \\
 & A8O,39,A6I,GQP,Z2Y,BFG,BMN,KMR,P1R,
 \\
  &
 VDC,IOU,EOT,B8U,OLC,QA8,WY1,I8L
 \\
 &
A8O,26,LPT,GQP,Z2Y,BFG,HLJ,KMR,P1R,
\\
 & 
VDC,UE4,EOT,B8U,T47,QA8,WY1,I8L \\
 &
B. \\
 & 
N5Q,53,LPT,W6O,Z2Y,BFG,BMN,MQ8,P1R,
\\
 & 
VDC,UE4,HGR,B8U,T47,QA8,WY1,GC4
 \\
 & 
N5Q,35,A6I,GQP,Z2Y,BFG,BMN,MQ8,P1R,
\\
 & 
VDC,IOU,EOT,B8U,OLC,QA8,WY1,LSU\\
\bottomrule
\end{tabular}
}
\caption{Prompt Examples on the Thyroid Dataset.} \label{fullprompt2}
\end{table*}

\noindent In Table \ref{tab:prompt_design}, we give a sketch of our three-stages prompting strategy:  

\noindent \textbf{Initialization}. As with EPIC, in the initial step, we construct a contextualized metadata description with background explanations to define and briefly describe key variables (e.g., TSH, T3, FTI), thereby infusing domain knowledge into the language model. Furthermore, categorical samples are annotated with alphanumeric identifiers (e.g., A, B, C, D) to enhance discriminative awareness of class labels. This prompt serves as the contextual basis.

\noindent \textbf{(i) Relationship Analysis Prompt Phase}. This stage employs carefully designed prompts to guide the LLMs in analyzing relationships between features and target variables, leveraging both knowledge and statistical analysis perspectives. By inputting small-scale core-set samples, the model identifies potential correlations and interaction terms, thereby establishing plausible inter-attribute relationships that can be used to generate subsequent data.

\noindent \textbf{(ii) Constraint Derivation Prompt Phase}. Following the initial attribute relationship analysis, we use these findings as contextual input to guide the model in refining and generating data construction rules under constraints. These rules encompass qualitative or quantitative inter-attribute relationships (e.g., ``TSH is positively correlated with T3''), thereby providing explicit constraint specifications for the generative model.

\noindent \textbf{(iii) Data Generation Prompt Phase}. In the final stage, we leverage the constraints derived in the preceding phase and iteratively feed batches of the dataset into the LLMs to synthesize more representative and balanced training data. Crucially, this phase incorporates a dynamic guidance mechanism in which each generated batch is immediately assessed for quality. The evaluation results are transformed into structured guidance prompts that direct subsequent generation iterations, ensuring continuous quality improvement and adherence to the established constraints. This process ensures both usability and diversity of the generated data, thereby enhancing performance for imbalanced classification tasks.

\noindent Finally, Tables \ref{fullprompt1} and \ref{fullprompt2} provide the concrete prompts used in RDDG on the Travel and Thyroid datasets, respectively. 

\subsection{Comparisons Between EPIC and RDDG} \label{Appendix-algocompare}

As LLM-based in-context learning approaches for tabular data synthesis, both EPIC and RDDG are model-training-free and highly efficient, yielding high-quality generated data. Both approaches adopt batch-wise synthetic data generation pipelines. Their main differences include: i) RDDG is a progressive framework with CoT steps, which break the in-context learning-based generation process into relation mining, data generation, and constraint optimization. In contrast, EPIC lacks such a learning design. ii) RDDG devises the self-reinforcing feedback mechanism that makes automatic assessment on the quality of the generated data in the preceding round with respect to the reference set (a batch of real data), and such evaluation results are turned into feedback prompts and incorporated in the subsequent in-context learning-based generation step. However, EPIC lacks such a feedback mechanism. iii) We formulate the self-reinforcing feedback process as a Bayesian calibration problem and establish theoretical guarantees for our framework. Specifically, we prove the Bayes-optimal performance of our approach and show that, under certain assumptions, our feedback mechanism converges to these optimal strategies. Moreover, extensive experiments demonstrate that RDDG achieves significantly better performance than EPIC in both imbalanced classification and data fidelity.

\subsection{Pseudo Code of RDDG}
\label{sec:pseudoalgo}

\begin{algorithm}[htbp]
\caption{CreateFeedback: Self-Reinforcing Feedback Mechanism}
\label{alg:feedback}
\begin{algorithmic}[1]
\REQUIRE Quality metrics $Q_{stat}$, $Q_{corr}$, $Q_{dist}$

\STATE Initialize feedback: $\mathcal{F} = \{\}$

\IF{$Q_{stat}.\text{mean\_diff} > \tau_{mean}$}
    \STATE $\mathcal{F} \leftarrow \mathcal{F} \cup \{\text{"Adjust mean values closer to: "} + \text{target\_means}\}$
\ENDIF

\IF{$Q_{stat}.\text{std\_diff} > \tau_{std}$}
    \STATE $\mathcal{F} \leftarrow \mathcal{F} \cup \{\text{"Maintain variance similar to: "} + \text{target\_stds}\}$
\ENDIF

\IF{$Q_{corr}.\text{max\_diff} > \tau_{corr}$}
    \STATE Identify problematic attribute pairs $(a_i, a_j)$
    \STATE $\mathcal{F} \leftarrow \mathcal{F} \cup \{\text{"Strengthen correlation between "} + (a_i, a_j)\}$
\ENDIF

\IF{$Q_{dist}.\text{ks\_statistic} > \tau_{ks}$}
    \STATE Identify distribution deviations
    \STATE $\mathcal{F} \leftarrow \mathcal{F} \cup \{\text{"Align distribution patterns for: "} + \text{attributes}\}$
\ENDIF

\STATE Format feedback as structured prompt guidance
\RETURN $\mathcal{F}$

\end{algorithmic}
\end{algorithm}

\begin{algorithm}[htbp]
\caption{RDDG: Relational Data Generator with Dynamic Guidance}
\label{alg:rddg}
\begin{algorithmic}[1]
\REQUIRE Training dataset $\mathcal{D}_{train} = \{(\mathbf{x}_i, y_i)\}_{i=1}^N$, Target synthetic samples $N_{target}$, Reference set size $B$, Core set size per class $K$, LLM model $\mathcal{M}$
\ENSURE Synthetic dataset $\mathcal{D}_{syn}$

\STATE \textbf{Core Set Construction}
\STATE Initialize MLP classifier $f_{\theta}$
\STATE Partition training into phases: $\mathcal{T} = \{\mathcal{T}_{early}, \mathcal{T}_{mid}, \mathcal{T}_{late}\}$
\FOR{each training phase $t \in \mathcal{T}$}
    \FOR{each sample $(\mathbf{x}_i, y_i) \in \mathcal{D}_{train}$}
        \STATE Compute L2 error: $e_i^t = \|\mathbf{y}_{pred} - \mathbf{y}_{true}\|_2^2$
    \ENDFOR
\ENDFOR
\FOR{each sample $i$}
    \STATE Compute variance: $Var_i = \sum_{t \in \mathcal{T}} \text{Var}(e_i^t)$
\ENDFOR
\FOR{each class $c \in \mathcal{C}$}
    \STATE $\mathcal{S}_c = \text{argtop}_K(\{Var_i \mid y_i = c\})$ \COMMENT{Select top-K variance samples}
    \IF{$|\mathcal{S}_c| < K$}
        \STATE Apply replacement sampling to reach $K$ samples
    \ENDIF
\ENDFOR
\STATE $\mathcal{D}_{core} = \bigcup_{c \in \mathcal{C}} \mathcal{S}_c$ \COMMENT{Combine coresets}

\STATE
\STATE \textbf{Phase 1: Relationship Analysis}
\STATE Construct metadata prompt: $P_{meta} = \text{DescribeAttributes}(\mathcal{D}_{train})$
\STATE Initialize LLM with context: $\mathcal{M}.\text{init}(P_{meta})$
\STATE Generate relationship analysis prompt: $P_{rel} = \text{BuildRelationshipPrompt}(\mathcal{D}_{core})$
\STATE Extract relationships: $\mathcal{R} = \mathcal{M}.\text{analyze}(P_{rel}, \mathcal{D}_{core})$

\STATE
\STATE \textbf{Phase 2: Constraint Derivation}
\STATE Generate constraint prompt: $P_{const} = \text{BuildConstraintPrompt}(\mathcal{R})$
\STATE Derive constraints: $\mathcal{C}_{rules} = \mathcal{M}.\text{extract\_constraints}(P_{const}, \mathcal{R})$

\STATE
\STATE \textbf{Phase 3: Batch-wise Data Generation with Dynamic Guide}
\STATE Initialize synthetic dataset: $\mathcal{D}_{syn} = \emptyset$,  $i \leftarrow 1$
\STATE Initialize feedback: $\mathcal{F}_0 = \emptyset$
\STATE Partition original data into batches: $\mathcal{B} = \{\mathcal{B}_1, \mathcal{B}_2, ..., \mathcal{B}_m\}$ where $|\mathcal{B}_j| = B$

\WHILE{$|\mathcal{D}_{syn}| < N_{target}$}
    \STATE \textbf{Generate Batch:}
    \STATE Build generation prompt: $P_{gen} = \text{BuildGenPrompt}(\mathcal{B}_i, \mathcal{C}_{rules}, \mathcal{F}_{i-1})$
    \STATE Generate samples: $\mathcal{S}_i = \mathcal{M}.\text{generate}(P_{gen})$
    
    \STATE \textbf{Quality Evaluation:}
    \STATE Compute statistical consistency: $Q_{stat} = \text{CompareStats}(\mathcal{S}_i, \mathcal{B}_i)$
    \STATE Compute correlation preservation: $Q_{corr} = \text{PearsonDiff}(\mathcal{S}_i, \mathcal{B}_i)$
    \STATE Compute distribution consistency: $Q_{dist} = \text{KSTest}(\mathcal{S}_i, \mathcal{B}_i)$
    
    \STATE \textbf{Self-reinforcing Feedback Update:}
    \STATE Generate feedback: $\mathcal{F}_i = \text{CreateFeedback}(Q_{stat}, Q_{corr}, Q_{dist})$
    
    \STATE \textbf{Update Dataset:}
    \STATE $\mathcal{D}_{syn} \leftarrow \mathcal{D}_{syn} \cup \mathcal{S}_i$
    \STATE $i \leftarrow (i \mod m) + 1$ \COMMENT{Cycle through batches}
\ENDWHILE

\RETURN $\mathcal{D}_{syn}$
\end{algorithmic}
\normalsize
\end{algorithm}

\end{document}